\documentclass[msom,sglanonrev]{informs4} 
\RequirePackage{tgtermes}
\RequirePackage{newtxtext}
\RequirePackage{newtxmath}
\RequirePackage{bm}
\RequirePackage{endnotes}

\OneAndAHalfSpacedXII  
\usepackage{algorithm}
\usepackage{algpseudocode}
\usepackage{tikz}
\usepackage{booktabs}
\usepackage{placeins}

\makeatletter
\newenvironment{breakablealgorithm}
  {\par\addvspace{\medskipamount}\refstepcounter{algorithm}%
   \hrule height.8pt depth0pt \kern2pt%
   \renewcommand{\caption}[2][\relax]{%
     {\raggedright\textbf{\@ifundefined{fname@algorithm}{Algorithm}{\fname@algorithm}~\thealgorithm}\ ##2\par}%
     \kern2pt\hrule\kern2pt}}
  {\kern2pt\hrule\par\addvspace{\medskipamount}}
\makeatother

\definecolor{v5col}{rgb}{0,0,0} 
\definecolor{codexcomment}{rgb}{0,0,0}

\usepackage[normalem]{ulem}

\usepackage{natbib}
 \bibpunct[, ]{(}{)}{,}{a}{}{,}%
 \def\bibfont{\small}%
 \def\bibsep{\smallskipamount}%
 \def\bibhang{24pt}%

\renewcommand{\theARTICLETOP}{} 
\makeatletter
\def\theARTICLEABSTRACT{%
 \HOOKb
 \vspace*{18pt}
 \begin{minipage}[t]{\textwidth}\parindent1em
  \ABSfont
  \noindent\theABSTRACT\endgraf
  \vskip5pt
  \theFUNDING
  \theKEYWORDS
  \theSUBJECTCLASS
  \theAREAOFREVIEW
  \theMSCCLASS
  \theORMSCLASS
  \if@BLINDREV\else\theHISTORY\fi
 \noindent \hrulefill
 \end{minipage}%
 \vspace*{0pt}
 }
\makeatother

\EquationsNumberedThrough   
\TheoremsNumberedThrough    
\ECRepeatTheorems

\definecolor{linkblue}{RGB}{36,65,100}
\usepackage[colorlinks=true,linkcolor=linkblue,citecolor=linkblue,urlcolor=linkblue,filecolor=linkblue,pdfborder={0 0 0},hypertexnames=false,plainpages=false,bookmarksnumbered]{hyperref}
\makeatletter
\def\ps@ssrn{%
  \let\@oddhead\@empty
  \let\@evenhead\@empty
  \def\@oddfoot{\hfil{\normalfont\small\thepage}\hfil}%
  \let\@evenfoot\@oddfoot
}
\let\ps@headings\ps@ssrn
\let\ps@ECheadings\ps@ssrn
\let\ps@firstpage\ps@ssrn
\makeatother
\AtBeginDocument{\pagestyle{ssrn}}

\MANUSCRIPTNO{}

\begin{document}

\RUNAUTHOR{Bu, Tang, and Zhang}

\RUNTITLE{OR for AI That Does OR}

\TITLE{OR for AI That Does OR: Routing LLMs up the Escalator inside the OSCAR Framework}

\ARTICLEAUTHORS{%
\AUTHOR{Jinzhi Bu, Haixin Tang}
\AFF{Department of Logistics and Maritime Studies, The Hong Kong Polytechnic University, Hong Kong\\
\EMAIL{jinzhi.bu@polyu.edu.hk, haixin.tang@connect.polyu.hk}}

\AUTHOR{Huanan Zhang}
\AFF{Leeds School of Business, University of Colorado Boulder, Boulder, CO, 80309, USA\\
\EMAIL{Huanan.Zhang@colorado.edu}}
}  

\ABSTRACT{%
\textbf{Problem definition:} Large language models can translate business descriptions into optimization models, but executable code may misrepresent constraints or objectives. A solver can then return an optimal solution to the wrong problem. \textcolor{black}{Even when the solution satisfies the intended operating rules, a better plan may exist.} For organizations that repeatedly use optimization modeling, an LLM-based framework should produce accurate formulations at low cost and, ideally, run locally. We study how to verify improvements and allocate attempts across LLMs that differ in price and capability.

\textbf{Methodology/results:} We develop OSCAR (\textbf{O}ptimization modeling by \textbf{S}imulator, \textbf{C}oder, \textbf{A}nd \textbf{R}eviewer), which uses an offline Simulator certified against \textcolor{black}{labeled decision examples} to compare candidates and continues searching beyond feasibility. \textcolor{black}{We model the search for the next certified improvement as sequential decisions under unobserved difficulty: which LLMs to call and when to stop. In a simplified known-prior setting, we give conditions under which cost-ordered escalation is optimal.} For general menus, we derive a prior-free competitive guarantee. On five benchmark problems, OSCAR achieves 95\% to 100\% accuracy at the reported settings using \textcolor{black}{two small open-weight LLMs, each deployable locally on a single GPU}. Their single-attempt accuracies average 29\% and 48\%. In five runs per problem, Codex and Claude Code incur average token costs 3.1 and 5.8 times OSCAR's, respectively.

\textbf{Managerial implications:} Before upgrading to frontier models for optimization modeling, firms should assess what available models can achieve through verification and iterative improvement. \textcolor{black}{OSCAR supports open-weight models locally or in the cloud, depending on budget and confidentiality requirements. Firms should maintain labeled decision examples of feasible and infeasible decisions to clarify plain-language operating rules.} \textcolor{black}{OSCAR follows these labels when an LLM's interpretation conflicts with them.} \textcolor{black}{As LLM capabilities and prices change, OSCAR's simple operating rules and adjustable settings help firms adapt their model choices and benefit from these advances.}
}%

\KEYWORDS{large language models, optimization modeling, AI agents, LLM routing, token cost, OR for AI}

\maketitle

\section{Introduction}\label{sec:intro}

Large language models (LLMs), on their own, are not designed to replace optimization solvers for exact numerical computation or large-scale search over combinatorial solution spaces. \textcolor{black}{In optimization modeling,} OR researchers use \textcolor{black}{LLMs} to translate business problems described in plain language into formal models and solver-ready code, leaving numerical computation and combinatorial search to solvers such as COPT or Gurobi. Changes in business rules or problem structures can create a recurring need for optimization modeling. Errors in this translation can be difficult to detect. A model with a misread constraint or an incorrect objective may compile and solve to optimality, even though it represents a different problem.

Generating more candidate formulations does not resolve this difficulty. \textcolor{black}{Figure~\ref{fig:formulations} illustrates three formulations of the same description with different feasible regions. A solution that is optimal for one need not be feasible or optimal for another.} Their reported objective values are not directly comparable, and the apparently best solution may be infeasible in the real system. Improving on a single attempt requires a common basis for comparing the resulting solutions.

\begin{figure}[htbp]
\FIGURE
{\includegraphics[width=0.9\textwidth]{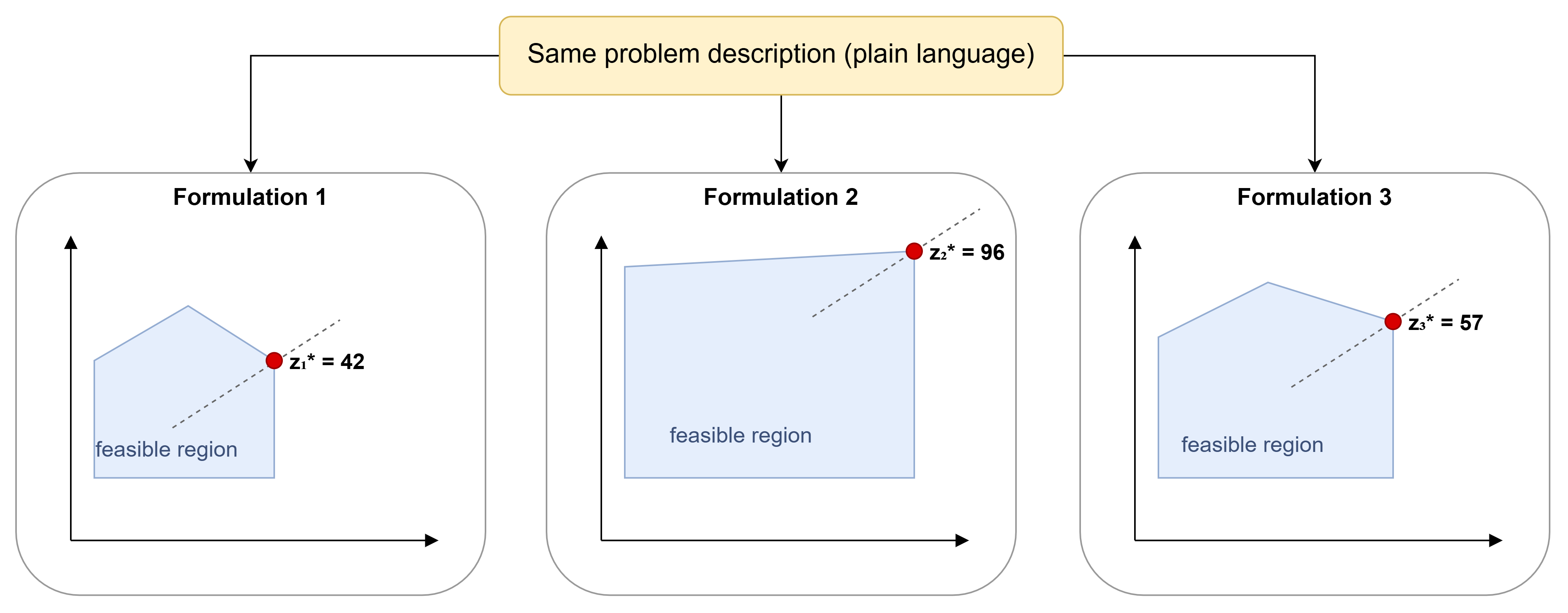}}
{Three candidate formulations from the same problem description. Each of them (or none of them) may be correct.\label{fig:formulations}}
{}
\end{figure}

\textcolor{black}{In the existing literature,} some  frameworks generate formulations without a post-execution validation-and-revision loop \textcolor{black}{\citep{huang2025orlm, liang2026lean}}. Others \textcolor{black}{revise formulations in response to execution feedback, including code errors or failure to obtain a feasible solution} \citep{xiao2024coe, ahmaditeshnizi2024optimus}, or check them through further LLM reasoning \citep{fang2026trival, liu2026optverifier}, solver certificates and domain checks \citep{ao2026optirepair}, or an executable simulator \citep{song2026nemo}. {OptMaster \textcolor{black}{constructed by \citet{lin2026optmaster}} also uses a simulator during continued search over a directed acyclic graph (DAG).} Yet a formulation that produces a feasible solution may exclude better solutions allowed by the problem description. Verification and validation of AI-generated models remain open research questions \citep{simchilevi2026democratizing}.

We develop OSCAR (\textbf{O}ptimization modeling by \textbf{S}imulator, \textbf{C}oder, \textbf{A}nd \textbf{R}eviewer), a framework that compares successive candidates and continues searching after feasibility is attained. Like \citet{song2026nemo}, OSCAR evaluates solutions with an executable offline Simulator. \textcolor{black}{In their framework}, a coding agent constructs the simulator and its tests from the problem description. OSCAR additionally uses \textcolor{black}{labeled decision examples: operational decisions with independently known feasibility labels and, when available, objective values.} \textcolor{black}{We use these additional data to test and iteratively revise the Simulator until it passes certification. Only then do we begin generating formulations, using the certified Simulator as a common basis for comparing candidates and identifying improvements.} Because further attempts \textcolor{black}{incur token costs} and models differ in price and capability, a routing policy chooses which models to call and decides when to stop.

We study a single target instance and, as in much of the literature \textcolor{black}{\citep{huang2025orlm, huang2024mamo, jiang2025llmopt}}, assess a formulation by \emph{solution correctness}: whether its solution is feasible and optimal for that instance in the real system. By contrast, \emph{formulation correctness} requires the model to be correct for every instance consistent with the description. We leave this stronger requirement to future research (Section~\ref{sec:conclusion}).

\subsection{Our Contributions}\label{sec:contrib}

The paper makes the following contributions.

\begin{itemize}
  \item \textbf{The OSCAR framework for improvement beyond feasibility.} We introduce OSCAR, which continues searching after a feasible solution is found. An offline Simulator compares candidate solutions so the framework can identify and retain improvements. \textcolor{black}{The Coder writes and revises formulations, while the Reviewer draws on the problem description, Simulator feedback, and its own judgment to recommend changes.} Once the Simulator is certified, the improvement loop requires no human intervention. OSCAR accommodates different tiers of LLMs, including inexpensive open models, models tailored to optimization, and highly capable general-purpose models. The framework allocates attempts according to their costs and capabilities.

  \item \textbf{The Escalator policy and its \textcolor{black}{theoretical} guarantees.} We model an improvement window, the search for the next certified improvement, as sequential search under an unobserved difficulty, with \textcolor{black}{given} per-call success probabilities and costs. \textcolor{black}{In a simplified setting with a known prior over difficulty, we establish conditions under which a cost-ordered escalation routing policy is optimal (Section~\ref{sec:prior}).} For a general menu without a prior, Algorithm~\ref{alg:det-escalator} computes \textcolor{black}{an Escalator schedule} whose expected loss is within a constant factor of a clairvoyant's \textcolor{black}{(Section~\ref{sec:competitive})},  
  \textcolor{black}{where} the factor depends only on success probabilities. Expected loss includes calling costs and a penalty for ending the window without improvement. 

  \item \textbf{Numerical evidence.} On five FrontierOR problems \citep{frontieror2026}, \textcolor{black}{our experiments show that} \textcolor{black}{two small and older LLMs} produce correct solutions in 29\% and 48\% of single attempts on average. \textcolor{black}{Each LLM can be deployed locally on a single GPU.} Using these \textcolor{black}{LLMs}, OSCAR achieves 95\% to 100\% accuracy at the reported settings across the five problems, close to the observed performance of \textcolor{black}{two coding agents:} Codex with GPT-6 Astra and Claude Code with Fable~5.1. \textcolor{black}{However, the two coding agents' average token costs per run are 3.1 and 5.8 times OSCAR's, respectively.} Resampling one-shot formulations through the same Simulator can match OSCAR's accuracy, but costs 1.6 to 3.3 times as much on problems that a single attempt rarely solves. \textcolor{black}{Therefore,} the improvement loop is most useful on these harder problems.
\end{itemize}

\subsection{Managerial Insights}\label{sec:managerial}

\begin{itemize}
  \item \textbf{Evaluate available models before upgrading.} Before upgrading to frontier models for optimization modeling, firms should assess what available models can achieve through verification and iterative improvement. Cost and confidentiality both matter when formulations include sensitive business rules and data. OSCAR allows firms to use open-weight models locally or in the cloud, depending on their budget and confidentiality requirements.

  \item \textbf{Clarify business requirements with \textcolor{black}{labeled decision} examples.} Plain-language descriptions of business rules may be interpreted in different ways, leading to different judgments about whether a plan satisfies those rules. \textcolor{black}{Labeled decision examples show which decisions satisfy or violate operating rules and help make their intended meaning explicit.} Firms can make better use of optimization by maintaining these examples alongside their written descriptions. When an LLM's interpretation of the description conflicts with the examples, OSCAR follows the examples' known outcomes to build and test the Simulator.

  \item \textcolor{black}{\textbf{Keep benefiting as LLMs improve.}} OSCAR follows simple rules for allocating attempts and restarting the search. These operations are controlled by a small set of adjustable settings. As LLM capabilities and prices change, firms can tune the settings and add new models to keep benefiting from improvements in LLMs.
\end{itemize}

\subsection{Related Literature}\label{sec:lit}

\noindent\textbf{LLMs for optimization modeling.}
\textcolor{black}{Given the rapid growth of this literature, we review selected work on formulation generation, checking and iterative improvement.}
Research in this area studies both model training and the frameworks in which models operate. NL4Opt introduced a competition and benchmark for translating natural-language descriptions into optimization formulations \citep{ramamonjison2023nl4opt}. ORLM, LLMOPT, SIRL, and DPLM fine-tune models for optimization \textcolor{black}{\citep{huang2025orlm, jiang2025llmopt, chen2025sirl, zhou2026dp}}: ORLM uses synthetic modeling data, LLMOPT uses structured problem formulations, SIRL uses rewards from solver diagnostics, and DPLM addresses dynamic programming. \textcolor{black}{In our paper,} OSCAR can use specialized models as Coders for compatible problem classes, and our routing analysis allows a menu containing both specialized and general-purpose models.

\textcolor{black}{Another line of research develops agentic frameworks that coordinate LLMs for formulation generation and revision.}
Chain-of-Experts uses forward construction and backward reflection among specialized agents \citep{xiao2024coe}. OptiMUS cycles through a formulator, a programmer, and an evaluator that executes code and reports errors \citep{ahmaditeshnizi2024optimus}. LEAN-LLM-OPT adds retrieval tools for large-scale problems \citep{liang2026lean}. \textcolor{black}{In Chain-of-Experts and OptiMUS, revision is driven mainly by execution feedback. Chain-of-Experts also responds when the generated model yields no feasible solution.} \textcolor{black}{AutoREM reuses structured memory distilled from failed reformulation trajectories to improve robust-optimization reformulation \citep{chen2026autorem}.}

\textcolor{black}{Several frameworks also check formulations. LLMOPT uses an LLM to assess both the formulation and the code, then revises either when needed \citep{jiang2025llmopt}. TriVAL uses LLM validators at the semantic, formulation, and code levels \citep{fang2026trival}, and Opt-Verifier checks modeling structure and solution validity with an LLM \citep{liu2026optverifier}. OptiRepair combines solver infeasibility certificates with rationality checks \citep{ao2026optirepair}. ReLoop perturbs a formulation's parameters and tests its objective response \citep{lian2026reloop}. For vehicle-routing problems, \citet{luo2026constraintinjection} tests generated formulations with feasible and targeted constraint-violating probes to filter training data and supply rewards. The probes are labeled using a trusted Gurobi reference formulation, which OSCAR does not require.  \citet{rahman2026verisimpl} constructs queries with a solver and asks an LLM to reason about them. In constraint programming, \citet{song2026cpsync} asks an LLM to write a checker from the problem statement and finds that it often rejects correct solutions.}

 \textcolor{black}{NEMO \textcolor{black}{in \citet{song2026nemo}} and OptMaster \textcolor{black}{in \citet{lin2026optmaster}} are particularly related designs. In} \citet{song2026nemo}, a coding agent writes both an executable simulator and unit tests from the same problem description \textcolor{black}{and selected extraction}. A simulator is admitted after passing their tests. A solution is accepted when the simulator confirms feasibility and agreement with the reported objective value.  
\textcolor{black}{In \citet{lin2026optmaster},}  \textcolor{black}{OptMaster organizes attempts in a DAG, with solving, verification and summarization at each node. Like OSCAR, OptMaster uses a simulator to check feasibility and objective values. OptMaster does not focus on verifying the simulator. An LLM interprets the results to guide revisions and branch combinations, particularly when progress stalls. Passing checks need not end the search. For formulation tasks, an attempt budget and an LLM's assessment of correctness determine when the search stops.}

\textcolor{black}{Table~\ref{tab:lit} compares these frameworks, focusing on those that explicitly check formulations.} \citet{rahman2026verisimpl} and \citet{song2026cpsync} select among multiple candidate formulations. TriVAL and NEMO also consider multiple candidates within their procedures. Several frameworks report token use \citep{xiao2024coe, ahmaditeshnizi2024optimus, fang2026trival, liu2026optverifier, ao2026optirepair, lian2026reloop}. Like \citet{song2026nemo}, we use an executable simulator to evaluate solutions. \textcolor{black}{We additionally use externally supplied labeled decision examples to certify the Simulator before formulation generation, and keep the Simulator fixed during the search. Then we} 
use it to compare candidates beyond the first feasible solution and allocate costly attempts to seek further improvements.

\begin{table}[htbp]
\TABLE
{\textcolor{black}{Comparison of optimization modeling frameworks.}\label{tab:lit}}
{{\small
\setlength{\tabcolsep}{3pt}
\newcommand{\litstack}[1]{\begin{tabular}[c]{@{}c@{}}#1\end{tabular}}
\begin{tabular}{@{}lcc@{\hspace{14pt}}ccc@{\hspace{14pt}}c@{}}
\toprule
 & \multicolumn{2}{c}{Revision} & & & & \\
\cmidrule(lr){2-3}
 & \litstack{syntax\\repair} & \litstack{formulation\\revision} & \litstack{formulation\\checked by} & \textcolor{black}{\litstack{simulator\\validated by}} & \litstack{search\\behavior} & \textcolor{black}{\litstack{token\\cost}} \\
\midrule
\multicolumn{7}{@{}l}{\emph{Fine-tuned models}} \\
ORLM \textcolor{black}{\citep{huang2025orlm}}                &            &            & none      &                   &            & \\[2pt]
SIRL \citep{chen2025sirl}                 &            &            & none      &                   &            & \\[2pt]
DPLM \citep{zhou2026dp}                   &            &            & none      &                   &            & \\[2pt]
\addlinespace
\multicolumn{7}{@{}l}{\emph{Agentic frameworks}} \\
Chain-of-Experts \citep{xiao2024coe}      & \checkmark & \checkmark & none      &                   &            & reported \\[2pt]
OptiMUS \citep{ahmaditeshnizi2024optimus} & \checkmark & \checkmark & none      &                   &            & reported \\[2pt]
LEAN-LLM-OPT \citep{liang2026lean}        &            &            & none      &                   &            & \\[2pt]
\addlinespace
\multicolumn{7}{@{}l}{\emph{Frameworks that check the formulation}} \\
LLMOPT \citep{jiang2025llmopt}            & \checkmark & \checkmark & LLM & & \textcolor{black}{stops after validation} & \\[2pt]
TriVAL \citep{fang2026trival}             & \checkmark & \checkmark & LLM       &                         & \textcolor{black}{stops after validation} & reported \\[2pt]
Opt-Verifier \citep{liu2026optverifier}   & \checkmark & \checkmark & LLM       &                         & \textcolor{black}{\litstack{stops after one\\refinement pass}} & reported \\[2pt]
OptiRepair \citep{ao2026optirepair}       &            & \checkmark & \litstack{solver and\\domain checks} &                         & \textcolor{black}{stops after validation} & reported \\[2pt]
ReLoop \citep{lian2026reloop}             & \checkmark & \checkmark & solver    &                         & \textcolor{black}{stops after validation} & reported \\[2pt]
NEMO \citep{song2026nemo}                 & \checkmark & \checkmark & simulator & tests it wrote itself   & \textcolor{black}{stops after validation} & \\[2pt]
\textcolor{black}{OptMaster \citep{lin2026optmaster}} & \textcolor{black}{\checkmark} & \textcolor{black}{\checkmark} & \textcolor{black}{\litstack{simulator\\and LLM}} & \textcolor{black}{none} & \textcolor{black}{\litstack{DAG search\\guided by LLM}} & \textcolor{black}{capped} \\[2pt]
\addlinespace
OSCAR (this paper)                        & \checkmark & \checkmark & simulator & \textcolor{black}{\litstack{labeled decision\\examples given to it}} & \litstack{retains an incumbent\\and searches beyond\\feasibility} & managed \\[2pt]
\bottomrule
\end{tabular}}} 
{\textcolor{black}{Formulation checking excludes reviews during formulation generation and feedback on execution errors. ``Capped'' denotes an upper bound on the number of attempts. ``Managed'' denotes cost-aware allocation of attempts across LLMs.}}
\end{table}

\noindent\textbf{LLM routing and cascades.}
A separate literature reduces inference cost by matching queries to models. FrugalGPT learns a model cascade and confidence thresholds for accepting responses \citep{chen2023frugalgpt}. RouteLLM learns routers from preference data \citep{ong2025routellm}. AutoMix uses few-shot self-verification scores to route queries, including through a partially observable Markov decision process \citep{aggarwal2024automix}. These methods balance cost and response quality for individual queries. Our model concerns repeated attempts at the same problem under a shared latent difficulty, \textcolor{black}{allowing repeated calls to the same models.} Given success probabilities and prices, we compute a routing schedule and a competitive bound.

\noindent\textbf{Methodological foundations.} \textcolor{black}{Our analysis relates to two streams of research.  
  The first concerns Bayesian sequential experimentation, particularly within the framework of partially observable Markov decision processes (see \citealt{monahan1982state} for a survey and \citealt{alizamir2013diagnostic,araman2022diffusion} for related studies). 
Our known-prior model shares a similar updating-and-stopping structure: failed attempts update beliefs about a latent difficulty common to all calls, and we allocate repeated attempts across LLMs and determine when to stop.  The second stream applies competitive analysis to online decisions, which has been widely adopted in revenue management (e.g., \citealt{ball2009toward}) and resource allocation (e.g., \citealt{goyal2025asymptotically}). These studies evaluate policies relative to benchmarks with an informational advantage. We adopt a similar approach for the prior-free routing problem, and compare the Escalator policy's expected loss with that of a benchmark that observes the latent difficulty but not future call outcomes.}

\section{\textcolor{black}{The Simulator and Preliminary Observations}}\label{sec:env}

Before presenting OSCAR, we explain how its Simulator compares formulations, describe how we build it, and report observations from preliminary testing.

\subsection{Comparing Formulations through a Simulator}
\label{sec:simulator}

Moving beyond a one-shot formulation requires a way to compare candidates (Figure~\ref{fig:formulations}). In a fully automated framework, a human expert is unavailable for this task. An LLM arbiter faces much the same difficulty as the model that wrote the formulation: a constraint misread during drafting may also be misread during review. Further LLM reasoning can improve validation accuracy, but it adds cost and may still miss errors. \citet{fang2026trival} notes that subtle constraint violations may escape its LLM validators.

Comparing the formulations directly does not resolve this difficulty, because each evaluates objective values within its own feasible region. When two formulations yield different optimal solutions, a solution from one can be infeasible or suboptimal under the other. We therefore need a basis for comparing their solutions that is external to both formulations. Ideally, this comparison would reflect each solution's feasibility and objective value in the real system. A feasible solution is preferred to an infeasible one, and feasible solutions are ranked by objective values. We consider single-objective problems throughout, so any two feasible solutions can be compared. Since candidate solutions  \textcolor{black}{cannot}  
be tested in the real system, a Simulator evaluates them instead. It reports feasibility and, for a feasible solution, the objective value. For an infeasible solution, it is instructed to list every violated operating rule and explain each violation in plain language. Candidate solutions are evaluated and ranked by the Simulator and OSCAR retains the formulation producing the highest-ranked solution identified so far as the \emph{incumbent}.  The search continues even after a feasible solution is found,  and  stops only when the routing policy terminates the search and then returns the standing incumbent.

The Simulator is an offline oracle implemented as deterministic code. Just as we ask an LLM to write solver code rather than solve the problem itself, we ask it to write the Simulator rather than judge individual formulations. Once built, the Simulator remains fixed. Its construction requires four inputs:
\begin{itemize}
  \item A full problem description: the scenario, the goal, the supporting data and the rules that govern the system.
  \item The data of the target instance.
  \item The syntax of the operational decisions implemented in the system. Ancillary modeling variables, such as the binary indicators of a big-$M$ reformulation, are not needed.
  \item A small set of \textcolor{black}{labeled decision examples} in that syntax. Each example includes its feasibility status and, optionally, its objective value. Each infeasible example also includes a plain-language reason. Because the Simulator encodes rules rather than data, examples may come from other instances governed by the same rules. 
  \textcolor{black}{For example, in a facility planning problem, examples may be obtained from previous planning periods or similar facilities that have been built before.} 
 Optional \textcolor{black}{\emph{partial decision examples}}, each respecting or violating one constraint, help cover constraints that are easy to miss.
\end{itemize}
The first two inputs, the problem description and instance data, are standard in optimization modeling. OSCAR also requires decision syntax and \textcolor{black}{labeled decision examples}. These additional inputs record what decisions a firm implements and which plans satisfy or violate its rules. Maintaining them alongside written descriptions helps clarify plain-language rules, as discussed in Section~\ref{sec:managerial}. The syntax defines a common solution format, while the examples support testing and certification of the Simulator.

Construction proceeds iteratively. An LLM writes a candidate Simulator from the description and decision syntax, and we run it on every example. We check its feasibility classifications and, where a reference objective value is provided, its objective calculations. Failed cases and their plain-language reasons are returned to the LLM with the current code for revision. Because OSCAR uses the Simulator's explanations as well as its classifications, a separate LLM scores each rejection message on whether a reader who cannot see the code can identify the rule violated, the location of the violation and its magnitude. The mean score must reach a certain threshold. We also make examples unreadable, for instance by emptying fields, and require the Simulator to label each altered example \textsc{malformed}. Here, \emph{certification} means passing these construction checks. Certification occurs before any formulation is written (Appendix~\ref{ec:empirics}).

Two considerations make Simulator construction more tractable than optimization modeling. \textcolor{black}{First,}  \textcolor{black}{a Simulator can check a given plan by running code that simulates the system's dynamics and tests compliance with its operating rules. Writing this code is often more direct than expressing the same rules as optimization constraints, such as linear inequalities. For example, a routing Simulator can track vehicle loads and arrival times to check capacities and time windows.} \textcolor{black}{Second,} LLMs also write ordinary code in a general-purpose language more reliably than solver-specific modeling code. \citet{song2026nemo} uses this rationale and reports a similar difference in first-attempt correctness between simulators and optimization models. 

\textcolor{black}{Despite these advantages, constructing a Simulator that passes certification may require multiple attempts.} In our experiments, a weaker model produces a Simulator that passes certification for some problems. For others, it exhausts its budget and a stronger model takes over. We therefore build the Simulator using the cost-ordered escalation of Section~\ref{sec:policy}, as described in Appendix~\ref{ec:empirics}. On a few problems outside the benchmark study, Simulator construction required human help, mainly with data formatting (Appendix~\ref{ec:empirics}).

\subsection{Preliminary Observations}\label{sec:observations}

The following observations come from preliminary tests used to guide OSCAR's design, \textcolor{black}{which are} separate from the benchmark experiments in Section~\ref{sec:numerics}.

\noindent\textbf{Simulator feedback.} A Reviewer who views the problem description and formulation code, together with a binary infeasibility report, may still face difficulty in locating the formulation error. In this case, additional explanations from the Simulator can help the Reviewer diagnose these errors. We compared reviews with and without this explanation using 24 broken formulations from our own runs, all from one instance of one problem. Each formulation was reviewed twice by Qwen3.5-Flash, once with the Simulator's report and once without it. The Reviewer identified the true defect in 11 of the 24 cases without the report and in 21 with it. To examine whether diagnosis also led to repair, we tested one hard formulation in 50 review-and-repair runs of two rounds each. The share ending with a correct formulation rose from 4\% without the report to 18\% with it. These tests suggest that explanations can help both diagnosis and repair, though the evidence is limited to a small set of formulations. \citet{song2026nemo} reports a similar effect.

\noindent\textcolor{black}{\textbf{Reverting when switching to a stronger Coder.}} Repairing a weaker model's formulation requires a longer prompt that includes the flawed code and revision advice, increasing the cost per round. The flawed structure can also lead the stronger model to make superficial patches. In 50 tests, a one-shot formulation by Qwen3.6-Flash reached the optimum 23 times, compared with 19 times when Qwen3.5-Flash wrote the initial formulation and Qwen3.6-Flash refined it. The latter approach also used more tokens. \textcolor{black}{This comparison motivates discarding a weaker Coder's unsuccessful revisions before a stronger Coder takes over.} The stronger Coder returns to the best formulation retained so far.

\noindent\textbf{Limits of repeated revision.} Runs that reached the optimum typically did so within three to five revisions. Those that had not done so by then seldom succeeded later, even when allowed up to thirty revisions. Late revisions tended to make cosmetic changes, such as adding comments or renaming variables, and sometimes degraded the code. For example, splitting a constraint into equivalent pieces could slow the solver on large instances. \textcolor{black}{These observations motivate periodically discarding revisions and returning to the best formulation retained so far after repeated revisions fail to yield improvement (Section~\ref{sec:policy}).}

\noindent\textbf{Structure construction and syntax repair.} Organizing a description into a structured form before generating code is common practice and is the role of the structure constructor in Section~\ref{sec:framework}. Independent calls to the structure constructor may end up with slightly different structures, and affect the final formulation's quality to some extent, but an imperfect structure does not preclude a correct formulation after subsequent revision.  
\textcolor{black}{Syntax errors are common in generated code.} Feeding error messages back to the cheapest model on the menu resolved nearly all such errors in our tests, usually within two rounds.

\noindent\textbf{Cost of coding and reviewing.} The Coder and Reviewer read prompts of similar length, but the Coder writes a complete program while the Reviewer writes a short diagnosis. We tracked token usage and cost during preliminary testing. A Reviewer call averaged 5,528 \textcolor{black}{input tokens} and 6,077 \textcolor{black}{output tokens} at a cost of \$0.007. A Coder call averaged 6,696 \textcolor{black}{input tokens} and 11,184 \textcolor{black}{output tokens} at a cost of \$0.013, about 1.8 times as much.  \textcolor{black}{The output counts include thinking tokens.} The choice of models for the two roles therefore affects \textcolor{black}{the cost of a revision attempt}, with an increase in the Coder's tier costing more than an increase in the Reviewer's tier.

\section{The OSCAR Framework}\label{sec:framework}

OSCAR has three main components: the Simulator \textcolor{black}{as introduced in} Section~\ref{sec:simulator}, a Coder and a Reviewer. The Coder and Reviewer are selected from a menu of LLMs, which may be general-purpose models or models fine-tuned for optimization modeling. \textcolor{black}{A model's position on this menu, ordered by price, is its \emph{tier}. Higher-tier models are usually stronger.} We use \emph{tier} when the position itself matters, as in the adjacent-tier restriction of Section~\ref{sec:model}. Figure~\ref{fig:OSCAR} summarizes the preparations built once, the improvement loop and the outcome.

\begin{figure}[htbp]
\FIGURE
{\includegraphics[width=0.9\textwidth]{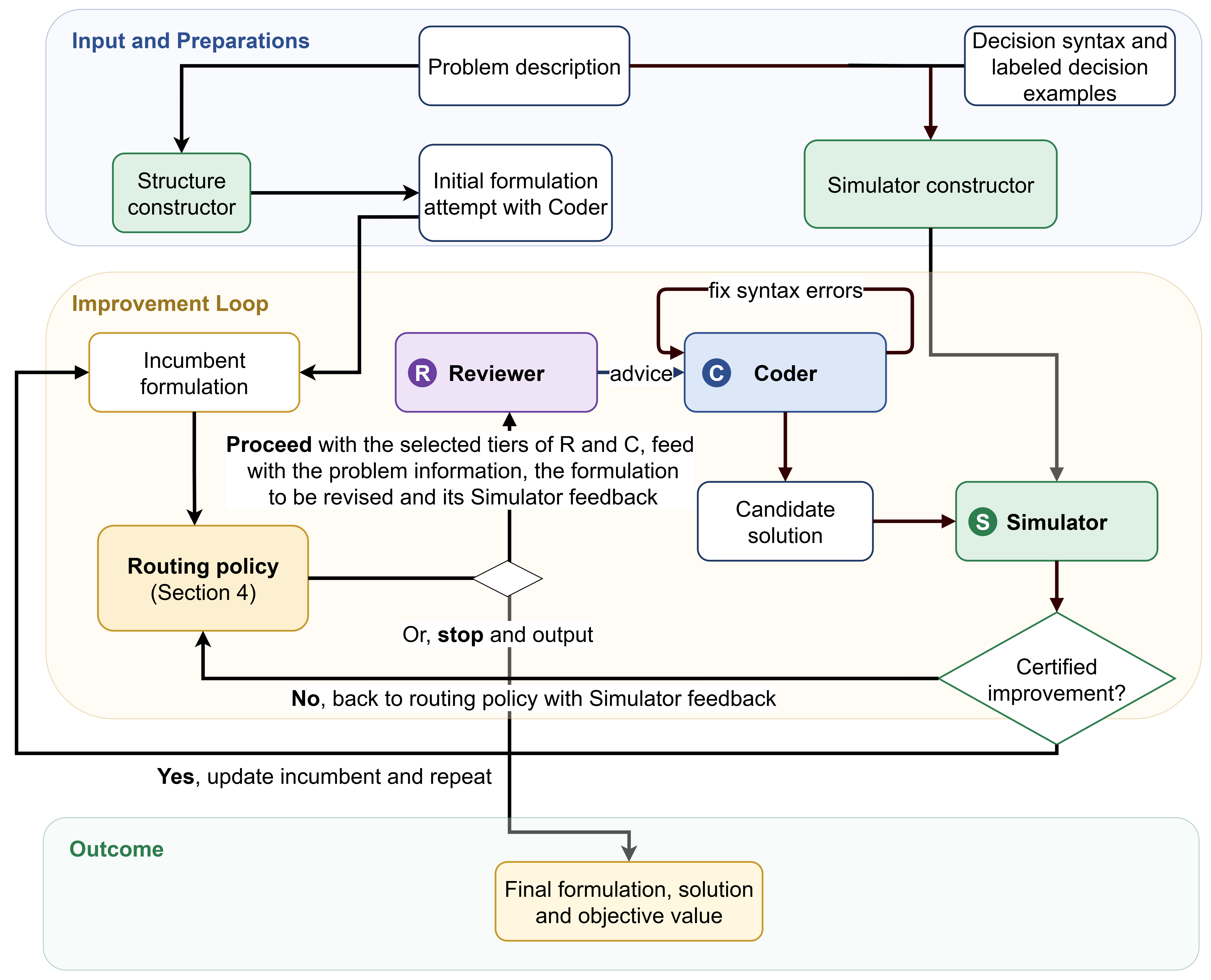}}
{The OSCAR framework, with some implementation details omitted.\label{fig:OSCAR}}
{}
\end{figure}

\noindent\textbf{Input and preparations.}
OSCAR begins with the four inputs of Section~\ref{sec:simulator}. The structure constructor converts the plain-language description into a schema that lists every required parameter, specifying \textcolor{black}{its data type, its dimensions (for example, one entry per customer),} and a one-line description. Later calls receive both the schema and the original description. This step follows established practice \citep{ahmaditeshnizi2024optimus, liang2026lean}. It uses a single call to the cheapest model on the menu and costs a fraction of a cent per run (Table~\ref{tab:num-cost}). After the Simulator constructor builds and certifies the Simulator, a first Coder call produces the initial formulation in one shot. \textcolor{black}{We use a strong model for the initial formulation. \textcolor{black}{When the Coder switches to a stronger LLM, it discards unsuccessful revisions and resumes from the incumbent} (Section~\ref{sec:observations}). Starting with a weak model would leave it revising a weak model's output even after this return.} The initial formulation becomes the incumbent for the improvement loop, with no guarantee of correctness.

\noindent\textbf{The improvement loop.}
At the start of each pass, the routing policy either stops or selects the Reviewer and Coder tiers for the next attempt. We take this policy as given here and formalize its decisions in Section~\ref{sec:model}. The Reviewer receives the problem information, the formulation to be revised and the Simulator's feedback on that formulation, then returns revision advice. Within a window, this formulation is either the latest draft or the incumbent retained at the start of the window (Section~\ref{sec:policy}). The Coder uses the advice to produce a candidate formulation and its solver code.

If the code fails to run, we repair it in a short loop, as described in Section~\ref{sec:observations}. The Simulator then evaluates the resulting solution against the operating rules. An attempt that produces no candidate solution bypasses the Simulator. \textcolor{black}{OSCAR reports this outcome as a possible sign that the formulation is over-constrained.}

\noindent\textbf{Certification and termination.} The Simulator's report ranks each candidate solution against the incumbent's solution. Every feasible solution ranks above every infeasible one, and feasible solutions are compared using objective values recomputed by the Simulator. Among infeasible solutions, the one that violates fewer operating rules ranks higher. A \emph{malformed} solution is one the Simulator cannot read. It is treated as a syntax error and returned to the repair loop, so it is never ranked.

\textcolor{black}{OSCAR certifies an improvement when a candidate ranks strictly above the incumbent.} That candidate replaces the incumbent and the loop continues. A candidate that does not improve the ranking leaves the incumbent unchanged, although the next attempt may continue to revise that candidate, subject to the reverting rule. The session ends when the routing policy stops or an external wall-clock or spending ceiling is reached (Section~\ref{sec:policy}). OSCAR then returns the standing incumbent with its solution and objective value.

\noindent\textbf{Routing decisions.}
\textcolor{black}{The routing policy determines which LLMs OSCAR calls as Reviewer and Coder, and when it stops.} \textcolor{black}{Using stronger LLMs may increase the chance of improving the incumbent, but also raises the cost per attempt.} The \textcolor{black}{routing policy} must choose between them with limited knowledge of the difficulty of the current improvement and decide \textcolor{black}{whether to continue after each failed attempt}. As Section~\ref{sec:model} shows, the stopping decision also depends on whether the incumbent is feasible. That section formalizes these decisions and derives the Escalator policy.

\section{Routing Control: The Escalator Policy}\label{sec:model}

\textcolor{black}{In the framework introduced in Section~\ref{sec:framework}, we have not yet specified which LLMs to call or when to stop. We now formulate these routing and stopping decisions and analyze the Escalator policy.} We analyze one improvement window at a time. A certified improvement replaces the incumbent and opens the next window. \textcolor{black}{We apply the same decision model to each window.} It is therefore enough to study an arbitrary window.

\subsection{General Formulation of Sequential Improvement Search}
\label{sec:protocol}
 \textcolor{black}{The overall improvement trajectory of an OSCAR session is hard to model directly for two reasons. First, a formulation contains interacting variables, constraints and objective terms. Problems also differ in the number and types of constraints they may require. Each revision may add, remove or modify several elements at once, correcting some errors while introducing others. These differences make it difficult to construct a common model of how formulations evolve and how revisions affect the resulting plans and the prospects for further improvement.}

\textcolor{black}{Second, progress can take different forms. Before feasibility is reached, a revision may reduce the number of operating rules violated by the resulting plan. Afterward, progress is measured by the objective value recomputed by the Simulator. We simplify the analysis by studying the search for the next certified improvement, without modeling the full sequence of formulation changes or the magnitude of each gain.} We define an \emph{improvement window} as the segment between two consecutive certified improvements. An improvement is certified when the Simulator ranks the new solution above the incumbent according to the ordering defined in Section~\ref{sec:framework}. Thus, reducing the number of violated rules counts as a certified improvement, just as improving the objective value does.

Formally, each window begins with a fixed incumbent, and there is no limit on the number of LLM calls that can be made within that window. A window terminates when either the Simulator certifies an improvement or the controller chooses to stop. If an improvement is certified, the improved formulation becomes the incumbent for the next window; otherwise, the incumbent remains unchanged.
\textcolor{black}{In the single-window model, every certified improvement yields the same reward $G>0$.}
 The parameter $G$ reflects how we weigh the risk of ultimately missing a correct formulation against the token cost.

We distinguish two types of improvement windows according to the feasibility of the incumbent at the start of the window. A \emph{Type~1} window begins with an infeasible incumbent. \textcolor{black}{If the underlying problem admits a feasible solution, that solution improves on the incumbent.}  
\textcolor{black}{A \emph{Type~2} window begins with a feasible incumbent, for which further improvement may not be possible.}

We make three simplifications to study a single improvement window. First, we take the total number of windows needed to reach the best formulation to be unaffected by within-window operations. This assumption is what permits treating $G$ as a constant reward per window. Second, we model the difficulty of reaching a certified improvement as fixed within a window. In particular, our analysis does not distinguish whether a revision starts from the incumbent or from a formulation already revised within the window. Section~\ref{sec:policy} sets the implementation choice between continuing to revise the current formulation and reverting to the incumbent. Third, we defer the initial one-shot coding round to Section~\ref{sec:policy} and Appendix~\ref{ec:num-models}.

\vspace{1mm}
\noindent
\textbf{Model primitives and dynamics.}
At the beginning of each improvement window, we assume there is a hidden state $\theta\in\Theta$ reflecting the difficulty of improving upon the incumbent. This state is fixed but remains unobservable to the controller throughout the window. Let $\mathcal{D}$ denote   a discrete set of improvable states, and let $o$ denote the    non-improvable state. For Type~1 window, we assume  $\Theta:=\mathcal D$, whereas for Type~2 window, we assume $\Theta:=\mathcal D\cup\{o\}$. 
Let $\mathcal A=\{1,\ldots,m\}$ denote the set of available \emph{configurations}. \textcolor{black}{A configuration is an ordered (Reviewer, Coder) pair of LLMs used in one revision attempt, and the two roles may be filled by models of different tiers.} We restrict the menu to configurations whose Reviewer and Coder are of the \emph{same or adjacent} tiers. The restriction is both reasonable and convenient.

\begin{itemize}
\item \textbf{Little to gain from distant pairs.} A configuration's cost is dominated by its more expensive component. Pairing a strong LLM with a much weaker counterpart therefore saves little. Under the policy we propose in Section~\ref{sec:competitive}, a strong model is reached only after weaker configurations have already failed, {\color{black}which signals a challenging window and calls for a capable counterpart}.
\item \textbf{Less to calibrate.} With $L$ LLM tiers, the restriction \textcolor{black}{to pairs of the same or adjacent tiers} gives $3L-2$ configurations instead of $L^{2}$, which is easier to tune in practice. It also reduces what must be measured before the policy can be run. With an unrestricted menu, configurations would first have to be ranked by cost and screened for dominance in success probability, which takes measurement. \textcolor{black}{With this restriction, the cost ranking is known in advance. Raising the Coder's tier costs more than raising the Reviewer's tier by the same amount. We therefore order configurations first by the Coder's tier and then by the Reviewer's.}
\end{itemize}

Each configuration is treated as a single action with one combined price and one certified-improvement probability.
At each decision epoch, the controller selects an action from $\mathcal U:=\mathcal A\cup\{\text{stop}\}.$
Running configuration $a\in\mathcal A$ incurs cost $c_a>0$. We index the configurations in nondecreasing order of cost, so that $c_1\le c_2\le\cdots\le c_m.$
Conditional on state $\theta\in\mathcal D$, a run of configuration $a$ produces a certified improvement with probability $q_{a,\theta}\in(0,1)$ and otherwise fails. In state $o$, no improvement is available, so $q_{a,o}=0$ for every $a\in\mathcal A$. Conditional on $\theta$, we assume call outcomes are independent across decision epochs. We further assume that an unsuccessful candidate has no residual value in the model. A successful call generates reward $G$ and terminates the window, whereas a failed call generates no reward and leads to another routing decision. Choosing $\text{stop}$ terminates the window without reward.

\vspace{1mm}
\noindent 
\textbf{Deterministic routing plans.} 
Let \(\Pi\) denote the set of deterministic routing plans. Because a successful call terminates the window and all subsequent decisions are made only after failures, a routing plan is fully characterized by its actions along the all-failure path.  We therefore write a routing plan as
$$ \pi=(a_1,\ldots,a_N;\mathrm{stop}), \qquad a_i\in\mathcal A, \qquad N\in\{0,1,\ldots\}\cup\{\infty\}. $$
Each scheduled call $a_i$ is made if and only if all preceding calls $a_1,a_2,\ldots, a_{i-1}$ have failed, and a certified improvement terminates the plan immediately. For finite \(N\), the plan stops if all \(N\) calls fail, with \(N=0\) representing immediate stopping. When \(N=\infty\),   calls are made without a terminal \(\mathrm{stop}\), so the plan continues after every failure.

The analysis in Sections~\ref{sec:prior} and~\ref{sec:competitive} will focus on a special class of routing policies, which we call Escalator policies.
An \emph{Escalator policy} follows a routing plan whose configuration indices are non-decreasing along the all-failure path.
Within an improvement window, it may repeat or skip configurations but never moves backward through the cost-ordered menu.

\vspace{1mm}
\noindent 
\textbf{Statewise expected reward.}
For a plan $\pi\in\Pi$ and state $\theta\in\Theta$, define the survival probability immediately before call $i$ as
 $S_{i,\theta}:=\prod_{k<i}(1-q_{a_k,\theta}).$
Thus, $S_{i,\theta}$ is the probability that call $i$ is reached after all previous calls have failed.
For a finite plan with $N<\infty$, define $S_{N+1,\theta}:=\prod_{k=1}^{N}(1-q_{a_k,\theta}),$
which is the probability that the plan terminates without an improvement.
The expected net reward conditional on $\theta$ is
\begin{equation}
\label{eq:statewise-value}
V(\pi\mid\theta)
=
G(1-S_{N+1,\theta})
-
\sum_{i=1}^{N}c_{a_i}S_{i,\theta},
\qquad \theta\in\mathcal D .
\end{equation}
For an infinite plan, the same expression is interpreted by taking the limit as $N\rightarrow\infty$, namely
\[
V(\pi\mid\theta)
=
\liminf_{N \to \infty}\ \left\{ G\left(1-S_{N+1,\theta}\right)
-
\sum_{i=1}^{N}c_{a_i}S_{i,\theta}\right\}.
\]
For every $\theta\in\mathcal D$,  $V(\pi|\theta)$ in the case of $N=\infty$ is well defined because the finite configuration set and strictly positive success probabilities imply
\[
\sum_{i=1}^{\infty}c_{a_i}S_{i,\theta}
\le
\frac{c_m}{\min_{a\in\mathcal A}q_{a,\theta}}
<\infty .
\]
For the non-improvable state $\theta=o$, all calls fail with probability one, so any infinite plan incurs infinite expected cost and $V(\pi\mid o)=-\infty$.

At this stage, we have not imposed a probability distribution on the hidden state $\theta$ or otherwise specified how statewise expected reward should be aggregated. Consequently, the general formulation does not yet define an optimal routing plan. In 
Section~\ref{sec:prior}, we assume a known prior distribution over $\theta$ and formulate the resulting Bayesian routing problem as a Partially Observable Markov Decision Process (POMDP). In Section~\ref{sec:competitive}, we treat $\theta$ as an unknown fixed parameter, and evaluate an Escalator policy against a clairvoyant who knows $\theta$, through competitive analysis.

\subsection{Exact Optimality in the Two-by-Two Case}\label{sec:prior}
In this subsection, we specialize the general sequential improvement search in Section~\ref{sec:protocol} to two configurations and a known prior over two improvable states, with an additional non-improvable state in Type~2 windows. We formulate the problem as an infinite-horizon POMDP and establish the Escalator structure of an optimal routing plan.

Let $\mathcal A=\{A,B\}$, where $A$ is the lower-cost \textcolor{black}{configuration} and $B$ is the higher-cost \textcolor{black}{configuration},
with $0<c_A<c_B$.  For Type~1 window, the hidden state is $\theta\in\Theta:=\{e,h\},$ and for Type~2 window, the hidden state is $\theta\in\Theta:=\{e,h,o\},$ 
where $e$ and $h$ indicate that the next improvement is, respectively, easy or hard to obtain,
and $o$ indicates that no further improvement over the opening incumbent is available. At the
beginning of the window, the controller has a known prior $ \mu=(\mu_\theta)_{\theta\in\Theta}\in\mathbf{\Delta}(\Theta):=\{\mathbf x: \sum_{\theta \in \Theta}x_{\theta}=1, x_{\theta}>0 \text{ for all } \theta \in \Theta\}$. As in Section~\ref{sec:protocol}, a call to \textcolor{black}{configuration} $a\in\{A,B\}$ succeeds with probability
$q_{a,\theta}\in(0,1)$ when $\theta\in\{e,h\}$, while $q_{a,o}=0$.  For Type~1 windows, we set $\mu_o=0$ and $\beta_o=0$ for every posterior belief $\beta$ whenever these coordinates appear below.

\vspace{1mm}
\noindent\textbf{Updating rule of posterior belief.}
Let $\beta=(\beta_\theta)_{\theta \in \Theta}$ denote the posterior belief immediately before a decision.
The posterior success probability of \textcolor{black}{configuration} $a$ is
\begin{equation}
  p_a(\beta):=\sum_{\theta\in\Theta}\beta_\theta q_{a,\theta}
  =\beta_e q_{a,e}+\beta_h q_{a,h}.
  \label{eq:posterior-success}
\end{equation}
If the call to \textcolor{black}{configuration} $a$ fails, let $F_a$ denote the failure event and the current belief is updated from $\beta$ to $T_a(\beta)$ according to the Bayes' rule as follows: 
\begin{equation}
  T_a(\beta)_\theta
  :=\frac{\Pr(F_a\mid \theta, a)\beta_{\theta}}{ \sum_{\theta' \in  \Theta} \Pr(F_a\mid \theta', a) \beta_{\theta'}}=\frac{\beta_\theta(1-q_{a,\theta})}{1-p_a(\beta)},
  \qquad \theta\in\Theta.
  \label{eq:failure-update}
\end{equation}
No posterior following success is needed because success terminates the window.
Therefore, after
$n_A$ failed calls to $A$ and $n_B$ failed calls to $B$, the posterior belief can be expressed as: 
\begin{equation}
  \beta_\theta(n_A,n_B)
  =\frac{\mu_\theta(1-q_{A,\theta})^{n_A}(1-q_{B,\theta})^{n_B}}
{\displaystyle\sum_{\theta'\in\Theta}
   \mu_{\theta'}(1-q_{A,\theta'})^{n_A}(1-q_{B,\theta'})^{n_B}},
  \qquad \theta\in\Theta.
  \label{eq:count-posterior}
\end{equation}

 \vspace{1mm}
 \noindent 
 \textbf{Optimal value function and Bellman equation.}
Let $J_{\infty}(\beta)$ denote the optimal expected net
reward starting from belief $\beta$, also referred to as the optimal value function:
\[
J_\infty(\beta)
:=\sup_{\pi\in\Pi}  V_{\beta} (\pi) = 
\sup_{\pi\in\Pi}
\sum_{\theta\in\Theta}
\beta_\theta V(\pi\mid\theta)\textcolor{black}{.}
\]  
To compute $J_\infty(\beta)$,   consider a finite-horizon problem with at most $\bar N  < \infty$  calls. For each $k=0,1,\ldots, \bar N$, 
let $J_k(\beta)$ denote the optimal expected net reward at belief $\beta$ when there are $k$ available calls to make. For any value function $f$, \textcolor{black}{define the Bellman operator:}
\begin{align}
    (\mathcal{L}_a f)(\beta)
  :=-c_a+G p_a(\beta)
   +\bigl(1-p_a(\beta)\bigr)f \bigl(T_a(\beta)\bigr),
  \qquad a\in\{A,B\}.  \label{eq-one step-Bellman operator}
\end{align}
Then the functions $\{J_k(\beta): k=0,1,\ldots, \bar N\} $ satisfy the following finite-horizon Bellman equation: $J_0(\beta)=0$ for any reachable belief $\beta$, and  for $k=1,2,\ldots, \bar N$ and any reachable belief $\beta$, 
\begin{align}
     J_{k}(\beta) 
  =\max\bigl\{0, \ (\mathcal{L}_A J_{k-1})(\beta),\ (\mathcal{L}_B J_{k-1})(\beta) \bigr\}. 
  \label{eq:belief-bellman}
\end{align} 
The following lemma shows the convergence  of $\{J_k(\beta): k=0,1,\ldots, \bar N\} $   to $J_{\infty}(\beta)$, whose proof is deferred to Appendix~\ref{appendix-proof-lemma-Bellman}. 
\begin{lemma}\label{lemma-sec 4.2-Bellman infinite horizon}
    For  any reachable belief $\beta$, 
    $\lim_{k \to \infty} J_k(\beta) = J_{\infty}(\beta)$, and  $J_{\infty}(\beta)$ satisfies the following infinite-horizon Bellman equation:  $J_{\infty}(\beta) = \max\{0,\ (\mathcal{L}_A J_{\infty})(\beta),\ (\mathcal{L}_B J_{\infty})(\beta) \}. $ 
\end{lemma}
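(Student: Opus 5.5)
The plan is to show that $J_k$ is nondecreasing in $k$ and uniformly bounded above, so that it has a limit $J^\ast(\beta)$. We then show $J^\ast=J_\infty$ by a two-sided comparison and pass to the limit in the Bellman recursion \eqref{eq:belief-bellman}.

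\textbf{Monotonicity and boundedness.} By induction, $J_1\ge 0=J_0$. If $J_k\ge J_{k-1}$, then each operator $\mathcal L_a$ is monotone in $f$, because $f$ enters with the nonnegative coefficient $1-p_a(\beta)$. Hence $J_{k+1}\ge J_k$. Equivalently, $J_k(\beta)=\sup\{V_\beta(\pi):\pi \text{ has at most } k \text{ calls}\}$, which is a standard identification of the finite-horizon DP with the best truncated plan. This is clear because a deterministic plan is described by its all-failure path and beliefs evolve deterministically along it via \eqref{eq:count-posterior}. Every plan has $V(\pi\mid\theta)\le G$, so $0\le J_k\le G$, and $J^\ast:=\lim_k J_k$ exists with $J^\ast\le J_\infty$ (finite plans are in $\Pi$).

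\textbf{Reverse inequality.} This is the main obstacle, because $J_\infty$ includes infinite plans and the state $o$ makes them worth $-\infty$ in Type~2. Fix $\pi\in\Pi$ with $V_\beta(\pi)>-\infty$. If $\pi$ is infinite and $\beta_o>0$, then $V_\beta(\pi)=-\infty$, so such plans are irrelevant. Otherwise all states in the support are in $\mathcal D$, and the costs are summable by the bound $c_m/\min_a q_{a,\theta}$ stated in Section~\ref{sec:protocol}. Let $\pi^N$ be the truncation of $\pi$ after $N$ calls. Then $V(\pi^N\mid\theta)=G(1-S_{N+1,\theta})-\sum_{i\le N}c_{a_i}S_{i,\theta}$, and its $\liminf$ is by definition $V(\pi\mid\theta)$. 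Since $S_{N+1,\theta}\le(1-\min_a q_{a,\theta})^N\to 0$ and the cost partial sums converge, the limit actually exists. Summing over the finitely many $\theta$ (here $\Theta$ is finite) gives $V_\beta(\pi)=\lim_N V_\beta(\pi^N)\le \lim_N J_N(\beta)=J^\ast(\beta)$. Taking the supremum gives $J_\infty\le J^\ast$, so $J^\ast=J_\infty$.

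\textbf{Bellman equation.} Reachable beliefs are the countable family $\beta(n_A,n_B)$. For fixed $\beta$ and $a$, the map $f\mapsto(\mathcal L_aJ_{k-1})(\beta)$ depends on $J_{k-1}$ only at the single point $T_a(\beta)$, which is again reachable. Therefore $J_{k-1}(T_a(\beta))\to J_\infty(T_a(\beta))$, and $(\mathcal L_aJ_{k-1})(\beta)\to(\mathcal L_aJ_\infty)(\beta)$. Since the maximum of three convergent sequences converges to the maximum of their limits, letting $k\to\infty$ in \eqref{eq:belief-bellman} yields $J_\infty(\beta)=\max\{0,(\mathcal L_AJ_\infty)(\beta),(\mathcal L_BJ_\infty)(\beta)\}$. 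The delicate point to check carefully is the identification of $J_k$ with the best $k$-call plan, which needs the posterior bookkeeping: $V_\beta$ of a plan starting with $a$ equals $-c_a+Gp_a(\beta)+(1-p_a(\beta))V_{T_a(\beta)}(\text{remainder})$. This follows from Bayes' rule \eqref{eq:failure-update} by factoring $S_{i,\theta}$.
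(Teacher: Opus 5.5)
Your proof is correct. It rests on the same truncation idea as the paper, but you close the gap between $J_k$ and $J_\infty$ differently.

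The paper proves the uniform sandwich
\[
0\le J_\infty(\beta)-J_k(\beta)\le G(1-\beta_o)\Bigl(1-\min_{a\in\mathcal A,\theta\in\mathcal D}q_{a,\theta}\Bigr)^k .
\]
It observes that, for any plan $\pi$, the excess of $V_\beta(\pi)$ over the value of its $k$-call truncation $\pi^{(k)}$ can only come from a reward $G$ earned after the first $k$ calls fail in an improvable state. Calling costs beyond call $k$ are nonnegative and can simply be dropped, even under the $\liminf$ definition. This argument covers all plans at once; infinite Type~2 plans satisfy it trivially, since their value is $-\infty$. It needs neither monotonicity of $J_k$ nor summability of the cost tail, and it gives a geometric rate that is uniform over plans.

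Your route obtains existence of $\lim_k J_k$ from monotonicity and boundedness. You then show $V_\beta(\pi^N)\to V_\beta(\pi)$ plan by plan. That step forces you to control the cost tail as well, via the bound $c_m/\min_a q_{a,\theta}$, and to split cases on $\beta_o$. It yields the limit but no rate.

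In exchange, you make explicit two points the paper leaves implicit. First, the recursion defining $J_k$ coincides with the supremum over plans with at most $k$ calls, which is your Bayes factorization of $S_{i,\theta}$. Second, the $\liminf$ in $V(\pi\mid\theta)$ is an actual limit.

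One small imprecision: in your reverse inequality, the ``otherwise'' case also contains finite plans with $\beta_o>0$, whose support includes $o$. These are handled trivially, since $\pi^N=\pi$ once $N$ reaches the plan's length, but the claim that the support lies in $\mathcal D$ is not true there.
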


\vspace{1mm}
\noindent
\textbf{Structural conditions.}
For each \textcolor{black}{configuration} $a\in\{A,B\}$ and improvable state
$\theta\in\{e,h\}$, define the cost-effectiveness ratio: $\gamma_{a,\theta}:={c_a}/{q_{a,\theta}}.$
Given state $\theta$ and \textcolor{black}{configuration} $a$, 
$\gamma_{a,\theta}$ is the expected total calling cost required to
obtain an improvement. Thus, a smaller value of $\gamma_{a,\theta}$ indicates
greater cost efficiency. We impose the following structural conditions
throughout this subsection.
\begin{assumption}
\label{ass:two-model-structure}
\begin{enumerate}
  \item[(i)] For each $a\in\{A,B\}$, $q_{a,e}>q_{a,h}$; 
  \item[(ii)] For each $\theta\in\{e,h\}$, $q_{B,\theta}\ge q_{A,\theta}$; 
  \item[(iii)] The two inequalities $ \gamma_{B,e}<\gamma_{A,e}$ and $ \gamma_{B,h}>\gamma_{A,h}$ 
  do not hold simultaneously.
  \item[(iv)]  $G\ge\gamma_{B,h}$. 
\end{enumerate}
\end{assumption}

Condition~(i) orders the two improvable states by difficulty: every \textcolor{black}{configuration} has
a higher success probability in state $e$ than in state $h$.  
Condition~(ii)
identifies $B$ as the more capable \textcolor{black}{configuration}, in the sense that it weakly dominates
$A$ in success probability in both states. Condition~(iii) imposes a
single-crossing restriction on the \textcolor{black}{configurations'} cost-per-success rankings. Because
a lower $\gamma_{a,\theta}$ indicates greater cost efficiency, it rules out
the reverse crossing in which $B$ is more cost-efficient in the easy state
while $A$ is more cost-efficient in the hard state. Thus, if the ranking
crosses, the permitted direction is that $A$ is more cost-efficient in state
$e$ and $B$ is more cost-efficient in state $h$. This directional crossing
provides the economic basis for one-way escalation from $A$ to $B$.
Condition~(iv) ensures that the reward is at least  the expected total cost of calling $B$ until success, even in the hard state. \textcolor{black}{It is used} in Theorem~\ref{thm:escalator}(i) to show that stopping is never strictly optimal in Type~1 windows.

\vspace{2mm}
We first present a lemma that compares the two possible orderings of an $A$-call and a $B$-call and characterizes how the preferred ordering evolves as failures update the posterior belief. 
\begin{lemma}[Adjacent interchange]
\label{lem:bellman-interchange}
Fix a reachable posterior belief $\beta$ and a bounded
 value function $f$.
Consider the two call orders $AB$ and $BA$, where the second call
is made only if the first fails, and both orders use the same
  value function $f$ if both calls fail. Then 
\begin{align}
  (\mathcal{L}_A\mathcal{L}_B f)(\beta)
-
(\mathcal{L}_B\mathcal{L}_A f)(\beta)
=
c_Bp_A(\beta)-c_Ap_B(\beta)= \sum_{\theta\in\{e,h\}}
\beta_\theta q_{A,\theta}q_{B,\theta}
\bigl(\gamma_{B,\theta}-\gamma_{A,\theta}\bigr). \label{eq-AB-BA}
\end{align} 
Moreover, the sign of the difference in \eqref{eq-AB-BA} is the sign of
\begin{align}
  D(\lambda)
:=
q_{A,e}q_{B,e}
  \bigl(\gamma_{B,e}-\gamma_{A,e}\bigr)
+
q_{A,h}q_{B,h}
  \bigl(\gamma_{B,h}-\gamma_{A,h}\bigr) \cdot \lambda\textcolor{black}{,} 
\label{eq-difference in lambda} 
\end{align}
where $\lambda=\lambda(\beta):=\beta_h/\beta_e$. Under parts (i) and (iii)
of Assumption~\ref{ass:two-model-structure}, $\lambda$ strictly increases along any all-failure path,
and $D(\lambda)$ can change sign at most once, and only from nonnegative
to nonpositive. 
\end{lemma}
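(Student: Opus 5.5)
We prove the interchange identity by expanding both two-step compositions of the Bellman operator \eqref{eq-one step-Bellman operator} directly. For the order $AB$, the first call is $A$ at belief $\beta$, and after a failure the belief is $T_A(\beta)$. The key computation is the joint probability that both calls fail:
\[
(1-p_A(\beta))\bigl(1-p_B(T_A(\beta))\bigr)=\sum_\theta \beta_\theta(1-q_{A,\theta})(1-q_{B,\theta}),
\]
which follows from \eqref{eq:failure-update}. This expression is symmetric in $A$ and $B$. The posterior after two failures is also order-invariant, $T_B(T_A(\beta))=T_A(T_B(\beta))$, since by \eqref{eq:count-posterior} it depends only on the failure counts. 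Hence the terminal term $f(\cdot)$, with its weight, cancels in the difference.

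The success-reward terms cancel as well. The total success probability of $AB$ is $1-\sum_\theta\beta_\theta(1-q_{A,\theta})(1-q_{B,\theta})$, which is symmetric, so the $G$ terms agree. Only the expected costs remain. The cost of $AB$ is $c_A+c_B(1-p_A(\beta))$, and the cost of $BA$ is $c_B+c_A(1-p_B(\beta))$. Subtracting gives
\[
(\mathcal L_A\mathcal L_B f)(\beta)-(\mathcal L_B\mathcal L_A f)(\beta)=-[c_A+c_B-c_Bp_A]+[c_B+c_A-c_Ap_B]=c_Bp_A(\beta)-c_Ap_B(\beta).
\]
Expanding $p_a(\beta)=\sum_\theta\beta_\theta q_{a,\theta}$ yields $\sum_\theta\beta_\theta(c_Bq_{A,\theta}-c_Aq_{B,\theta})$. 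Since $c_Bq_{A,\theta}-c_Aq_{B,\theta}=q_{A,\theta}q_{B,\theta}(\gamma_{B,\theta}-\gamma_{A,\theta})$, we obtain \eqref{eq-AB-BA}. The state $o$ contributes nothing because $q_{a,o}=0$. Boundedness of $f$ is needed only so that every term is finite.

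For the sign statement, we use that $\beta_e>0$ along every reachable path: the prior has full support, and \eqref{eq:count-posterior} preserves positivity because $q<1$. Dividing \eqref{eq-AB-BA} by $\beta_e>0$ gives exactly $D(\lambda)$ with $\lambda=\beta_h/\beta_e$, so the two quantities have the same sign.

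Monotonicity of $\lambda$ follows from \eqref{eq:failure-update}. After a failed call to $a$,
\[
\lambda\mapsto\lambda\,\frac{1-q_{a,h}}{1-q_{a,e}},
\]
and the factor $(1-q_{a,h})/(1-q_{a,e})$ exceeds $1$ by Assumption~\ref{ass:two-model-structure}(i). Hence $\lambda$ strictly increases along any all-failure path.

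Finally, $D$ is affine in $\lambda$, with intercept $d_e=q_{A,e}q_{B,e}(\gamma_{B,e}-\gamma_{A,e})$ and slope $d_h=q_{A,h}q_{B,h}(\gamma_{B,h}-\gamma_{A,h})$. An affine function with $\lambda>0$ changes sign at most once. A change from nonpositive to positive values would require $d_h>0$ and $d_e<0$, that is, $\gamma_{B,e}<\gamma_{A,e}$ together with $\gamma_{B,h}>\gamma_{A,h}$, which Assumption~\ref{ass:two-model-structure}(iii) forbids. Therefore, as $\lambda$ increases, any sign change of $D$ can only go from nonnegative to nonpositive.

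The main obstacle is the bookkeeping in the first step. The factorization of the joint failure probability and the commutation of the posteriors must be verified carefully so that the $f$ terms cancel exactly. The remaining steps are short.
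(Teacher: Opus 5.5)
Your proposal is correct and follows essentially the same route as the paper's proof. You split each two-step composition into expected calling cost, expected success reward and continuation value, show that the last two are order-invariant (via the symmetric joint failure probability and $T_B(T_A(\beta))=T_A(T_B(\beta))$), reduce the difference to $c_Bp_A(\beta)-c_Ap_B(\beta)$, and then obtain the sign claim by dividing by $\beta_e>0$, using the update $\lambda\mapsto\lambda(1-q_{a,h})/(1-q_{a,e})$ under Assumption~\ref{ass:two-model-structure}(i), and invoking part (iii) to exclude a negative-to-positive crossing of the affine function $D$.
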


 Lemma~\ref{lem:bellman-interchange} shows that
\(AB\) is weakly preferred to \(BA\) if and only if ${c_A}/{p_A(\beta)}
\leq
 {c_B}/{p_B(\beta)}.$
Thus, the \textcolor{black}{configuration} with the lower cost per success under the current posterior
should be placed first. 
The single-crossing result describes how this pairwise ordering evolves
after failures. Under
Assumption~\ref{ass:two-model-structure}(i), a failed call to \textcolor{black}{configuration} \(a\)
multiplies the posterior probability of the hard state relative to the easy state
by $(1-q_{a,h})/(1-q_{a,e})>1$. 
Hence, \(\lambda\) increases along the all-failure path. In
\(D(\lambda)\), the constant term reflects the \textcolor{black}{configurations'} cost-effectiveness
ranking in the easy state, whereas the coefficient of \(\lambda\) reflects
their ranking in the hard state. A change in \(D(\lambda)\) from negative
to positive would require \(B\) to be more cost-effective in the easy state
and \(A\) to be more cost-effective in the hard state. Assumption
\ref{ass:two-model-structure}(iii) rules out precisely this reverse
crossing. Therefore, as failures accumulate, the pairwise advantage of
\(AB\) over \(BA\) can change sign at most once, and only from non-negative
to non-positive. The proof of Lemma~\ref{lem:bellman-interchange} is deferred to Appendix~\ref{appendix-lemma-adjacent exchange}.

 \vspace{2mm}
The following lemma provides sufficient conditions under which replacing a routing plan's terminal call to  $A$ with  $B$ does not decrease its expected value. 
\begin{lemma}[Terminal replacement]
\label{lem:terminal-replacement}
Suppose Assumption~\ref{ass:two-model-structure}(ii) holds. Consider a finite routing plan whose last call is to \(A\), reached at posterior belief \(\beta\).  If
\[
  Gp_A(\beta)\ge c_A
  \qquad\text{and}\qquad
  \frac{c_B}{p_B(\beta)}\le\frac{c_A}{p_A(\beta)},
\]
then replacing the last call by $B$ does not decrease the plan's value.
\end{lemma}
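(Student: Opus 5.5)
Since the plan's actions along the all-failure path are fixed and the last call is the terminal one, I would first isolate the last call's contribution. Let \(S\) be the probability that the terminal call is reached, so the posterior at that point is \(\beta\). Everything the plan does before that point is unchanged by the replacement: the same calls are made with the same survival probabilities and the same costs. Hence the change in \(V_\mu(\pi)=\sum_\theta\mu_\theta V(\pi\mid\theta)\) equals \(S\) times the change in the continuation value from \(\beta\). Because the plan stops after the last call, that continuation value is \((\mathcal L_a 0)(\beta)=Gp_a(\beta)-c_a\). This follows from \eqref{eq:statewise-value}: the terminal term contributes \(G\,S_{N,\theta}q_{a_N,\theta}-c_{a_N}S_{N,\theta}\), and averaging over \(\mu\) and normalizing by \(S\) gives \(\beta\)-weighted quantities via \eqref{eq:count-posterior}. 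The whole task therefore reduces to showing
\[
Gp_B(\beta)-c_B\;\ge\;Gp_A(\beta)-c_A .
\]

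For this inequality I would use the second hypothesis, rewritten as \(c_B\le c_A\,p_B(\beta)/p_A(\beta)\). Here \(p_A(\beta)>0\), since the first hypothesis with \(c_A>0\) forces \(p_A(\beta)>0\). Substituting gives
\[
Gp_B-c_B\;\ge\;Gp_B-\frac{c_A p_B}{p_A}\;=\;\frac{p_B}{p_A}\,\bigl(Gp_A-c_A\bigr).
\]
Assumption~\ref{ass:two-model-structure}(ii) gives \(q_{B,\theta}\ge q_{A,\theta}\) for \(\theta\in\{e,h\}\), and \(q_{a,o}=0\), so \(p_B(\beta)\ge p_A(\beta)\) and \(p_B/p_A\ge1\). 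Since \(Gp_A-c_A\ge0\) by the first hypothesis, multiplying by a factor at least one does not decrease it. This yields \(Gp_B-c_B\ge Gp_A-c_A\).

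The only step needing care is the bookkeeping in the first paragraph. One must verify that the prefix's reward and cost terms are literally identical under both plans, and that the terminal contribution, aggregated over states, factors as \(S\cdot(Gp_a(\beta)-c_a)\) with \(\beta\) the count posterior. This holds because \(\mu_\theta S_{N,\theta}/\sum_{\theta'}\mu_{\theta'}S_{N,\theta'}=\beta_\theta\). In Type~2 windows the state \(o\) contributes zero success probability to both \(p_A\) and \(p_B\), so the argument is unchanged. The algebraic inequality itself is immediate.
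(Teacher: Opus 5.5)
Your proof is correct and follows essentially the paper's argument: both reduce the claim to comparing the terminal one-step values $Gp_B(\beta)-c_B$ and $Gp_A(\beta)-c_A$, using $p_B(\beta)\ge p_A(\beta)$ together with the two hypotheses, though the paper factors as $p_B(\beta)\bigl(G-c_B/p_B(\beta)\bigr)\ge p_A(\beta)\bigl(G-c_B/p_B(\beta)\bigr)$ where you factor out $p_B(\beta)/p_A(\beta)$. Your explicit bookkeeping, which shows that the value change equals the reach probability times the change in terminal value, makes rigorous what the paper states only as ``no further cost is incurred.''
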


The intuition behind Lemma~\ref{lem:terminal-replacement} is that the lower-cost \textcolor{black}{configuration} $A$ may be useful
early in the search as an inexpensive choice, but this does not apply to the final call because no future decision can benefit from the
information generated by a failure. The first condition ensures that
continuing with $A$ is still worthwhile relative to stopping, whereas the
second identifies a posterior at which $B$'s greater capability is
available at no higher cost per success. In this region, the higher price of
$B$ is justified by its capability advantage under
Assumption~\ref{ass:two-model-structure}(ii). Hence, an optimal plan has no
reason to end with $A$ and replacing the terminal $A$-call by $B$ cannot
reduce its value. The proof of Lemma~\ref{lem:terminal-replacement}
 is provided in Appendix~\ref{appendix-lemma-terminal replacement}.

\vspace{2mm}
We now  state the main result in this subsection. 
\begin{theorem}[Optimal routing within a window]\label{thm:escalator}
  Suppose Assumption \ref{ass:two-model-structure}(i)--(iii) hold. 
For any reachable belief $\beta$, $J_\infty(\beta)$ is attained
by a deterministic routing plan of the form
\begin{align} 
\bigl(
\underbrace{A,\ldots,A}_{n_A\text{ calls}},
\underbrace{B,\ldots,B}_{n_B\text{ calls}};
\mathrm{stop}
\bigr),\label{eq:escalator-form}
\end{align}
where $n_A,n_B\in\mathbb{N} \cup\{\infty\}$,
with $n_B=0$ if $n_A=\infty$. 
Moreover, the following results hold: 
\begin{itemize}
    \item[(i)] For Type~1 window, under Assumption~\ref{ass:two-model-structure}(iv),  
    stopping before success is never strictly optimal. In other words, $n_A=\infty$ or $n_B=\infty$.  
  If, additionally, $\gamma_{B,h}<\gamma_{A,h}$, then  $n_A $ can be chosen to be finite;

\item[(ii)] For Type~2 window,
every optimal deterministic
routing plan starting from prior  $\mu$ stops after at most
$\lfloor{G(1-\mu_o)}/{(\mu_o c_A)} \rfloor$ consecutive failures.  
\end{itemize}  
\end{theorem}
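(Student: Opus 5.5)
We will first establish the Escalator form \eqref{eq:escalator-form} through an exchange argument on optimal plans, and then specialize to the two window types.

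\textbf{Existence of a deterministic optimal plan.} By Lemma~\ref{lemma-sec 4.2-Bellman infinite horizon}, $J_\infty$ satisfies the Bellman equation. Since beliefs along the all-failure path are indexed by the counts $(n_A,n_B)$ via \eqref{eq:count-posterior}, the reachable belief set is countable. We therefore take, at each reachable belief, an action attaining the maximum. This defines a deterministic plan $\pi^*$. We will check that $\pi^*$ attains $J_\infty(\beta)$; for $N=\infty$ this uses the bound $\sum_i c_{a_i}S_{i,\theta}\le c_m/\min_a q_{a,\theta}$ on $\mathcal D$. In Type~2 windows with $\mu_o>0$, no infinite plan can be optimal because $V(\pi\mid o)=-\infty$.

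\textbf{Removing $B\to A$ switches.} Suppose an optimal plan contains an adjacent pair $B,A$ reached at belief $\beta$, with continuation value $f$. Lemma~\ref{lem:bellman-interchange} gives
\[
(\mathcal L_A\mathcal L_B f)(\beta)-(\mathcal L_B\mathcal L_A f)(\beta)=c_Bp_A(\beta)-c_Ap_B(\beta).
\]
Note that the belief after both calls fail is the same in either order. If the difference is nonnegative, we swap the pair without loss. Otherwise $D(\lambda(\beta))<0$. Because $\lambda$ increases along failures and $D$ crosses at most once from $\ge0$ to $\le0$, we get $c_B/p_B\le c_A/p_A$ at every later belief. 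Hence every later $A$ is weakly dominated.

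To exploit this, we take the last $A$ in the plan. If the plan is finite, Lemma~\ref{lem:terminal-replacement} replaces a terminal $A$ by $B$. The first condition $Gp_A\ge c_A$ holds at optimality, since otherwise stopping in place of the terminal $A$ would be better. Iterating backward from the end, each late $A$ becomes terminal-like after the later ones have been converted: an $A$ followed by a tail of $B$'s can be bubbled past them using the interchange, because the sign stays $\le0$. This converts all $A$'s after the first $B$ into $B$'s. For infinite plans, we apply the argument to truncations and pass to the limit using the convergence in Lemma~\ref{lemma-sec 4.2-Bellman infinite horizon}.

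I expect the main obstacle to be this ordering and induction. An alternative is a potential argument: count the inversions ($B$ before $A$), perform adjacent swaps while $D\ge0$, and use terminal replacement once $D<0$. Each such step does not decrease value and strictly reduces the inversion count. The result is the form $A^{n_A}B^{n_B}$. If $n_A=\infty$, the $B$-block is never reached, so we set $n_B=0$.

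\textbf{Part (i).} In a Type~1 window $\Theta=\{e,h\}$. Under (iv), the plan ``call $B$ forever'' has statewise value $G-c_B/q_{B,\theta}\ge G-\gamma_{B,h}\ge0$ at every belief, and more generally $J_\infty(\beta)\ge \mathcal L_B J_\infty(\beta)\ge0$. Stopping therefore yields $0$ and is never strictly better than continuing. We then choose the continue action in the Bellman equation whenever there is a tie, which gives $n_A=\infty$ or $n_B=\infty$.

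If additionally $\gamma_{B,h}<\gamma_{A,h}$, the coefficient of $\lambda$ in $D$ is negative, so $D(\lambda)<0$ for all sufficiently large $\lambda$. Since $\lambda\to\infty$ along consecutive $A$-failures (by (i)), $B$ eventually becomes strictly preferable. Interchange then shows that $A$ must stop being used after finitely many calls, so $n_A$ is finite.

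\textbf{Part (ii).} After $n$ failures, $\beta_o$ increases, because $\beta_o(n_A,n_B)\ge \mu_o/[\mu_o+(1-\mu_o)]$-type bounds hold with the improvable mass decaying. More directly, consider the expected value of continuing from the prior along an optimal plan that makes $n$ calls on the all-failure path. In state $o$ all $n$ calls are made at cost at least $c_A$ each, so the plan loses at least $\mu_o n c_A$. The gain from the improvable states is at most $G(1-\mu_o)$. Optimality requires $V_\mu(\pi)\ge0$, since stopping immediately yields $0$. This gives $n\le G(1-\mu_o)/(\mu_oc_A)$, and the floor follows because $n$ is an integer.
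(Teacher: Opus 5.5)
\textbf{Where the proposal falls short.} Your finite-plan exchange argument is sound and is essentially the paper's Step~1. The paper first reduces an optimal plan to $A^rB^sA^u$, replaces the last $A$ by $B$, and moves that $B$ leftward; you bubble the last $A$ rightward and then replace it, which is equivalent. (Only the combined bubble-and-replace move lowers your inversion count; each individual $AB\to BA$ swap raises it.) Part~(ii) matches the paper exactly.

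The first gap is the infinite case, which you dispose of in one sentence. This is where Type~1 optima such as $A^rB^\infty$ and $A^\infty$ live.
\begin{itemize}
\item If ``truncations'' means truncations of an optimal infinite plan, these are not optimal. Your justification of $Gp_A\ge c_A$ at the terminal call then disappears. Work instead with optimal plans $\pi_k$ of the $k$-call problem, which swaps and terminal replacement keep within $k$ calls, or delete an unprofitable terminal $A$.
\item More importantly, $J_k\to J_\infty$ (Lemma~\ref{lemma-sec 4.2-Bellman infinite horizon}) only shows that the Escalator plans $\pi_k=A^{r_k}B^{s_k}$ have values tending to $J_\infty(\beta)$. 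It does not produce one Escalator plan \emph{attaining} $J_\infty(\beta)$, which is what the theorem asserts.
\item You need a compactness-and-continuity step. Pass to a subsequence with $r_k\to r\in\mathbb N\cup\{\infty\}$ and $s_k\to s\in\mathbb N\cup\{\infty\}$. When $\beta_o=0$, use the closed form $V(A^rB^s;\mathrm{stop}\mid\theta)=(G-\gamma_{A,\theta})[1-(1-q_{A,\theta})^r]+(1-q_{A,\theta})^r(G-\gamma_{B,\theta})[1-(1-q_{B,\theta})^s]$ to show the statewise values converge to those of $A^\infty$, $A^rB^\infty$ or $A^rB^s$. When $\beta_o>0$, use $J_k(\beta)\ge 0$ to get $r_k+s_k\le G(1-\beta_o)/(\beta_oc_A)$, so some plan recurs along a subsequence.
\item Your greedy-plan construction does not remove this need, because the greedy plan may be infinite. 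Its optimality also needs more than the cost bound you cite. You must show that the tail term $\Pr(\text{first } n \text{ calls fail})\,J_\infty(\beta_{n+1})$ vanishes. This holds because $0\le J_\infty(\beta)\le G(1-\beta_o)$ and survival in $\mathcal D$ decays geometrically.
\end{itemize}

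\textbf{Part~(i).} Choosing ``continue'' at ties is a property of the greedy plan, not of the Escalator plan produced by the limit, which may come out finite. Instead, as the paper does, append $B^\infty$ after the stop of a finite optimal Escalator plan. This cannot lower the value, because $V_{\beta'}(B^\infty)\ge G-\gamma_{B,h}\ge 0$ at every reachable $\beta'$.

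For the finiteness of $n_A$, ``interchange then shows'' does not work as stated. Lemma~\ref{lem:bellman-interchange} only reorders calls already in the plan, and $A^\infty$ contains no $B$. The local fact $c_B/p_B<c_A/p_A$ for large $\lambda$ must therefore be turned into a strict global comparison. The paper does this directly:
$V_\beta(A^rB^\infty)-V_\beta(A^\infty)=\sum_{\theta\in\{e,h\}}\beta_\theta(1-q_{A,\theta})^r(\gamma_{A,\theta}-\gamma_{B,\theta})$.
This is positive for large finite $r$, because $\beta_h>0$, $\gamma_{A,h}>\gamma_{B,h}$ and $(1-q_{A,e})/(1-q_{A,h})<1$. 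Hence $A^\infty$ is not optimal, and the optimal non-stopping Escalator plan is $A^rB^\infty$ with $r<\infty$.

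An interchange route can be made to work. Insert a single $B$ late, note that $V_\beta(A^{m-1}BA^\infty)>V_\beta(A^mBA^\infty)$ once $D(\lambda)<0$, and use $V_\beta(A^mBA^\infty)\to V_\beta(A^\infty)$. That insertion-plus-limit step is exactly what your sketch leaves out.
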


The structure in \eqref{eq:escalator-form} follows by combining the local comparison results 
in Lemmas~\ref{lem:bellman-interchange} and~\ref{lem:terminal-replacement} with a finite-truncation argument \textcolor{black}{using the convergence established in Lemma~\ref{lemma-sec 4.2-Bellman infinite horizon}}.
Specifically, for each truncated problem with finitely many calls permitted, the single-crossing property in
Lemma~\ref{lem:bellman-interchange}  allows an optimal  sequence to be arranged
as $A^rB^sA^u$. If the terminal $A$-block   follows a non-empty
$B$-block, the final $A$-call must be worthwhile relative to
stopping, and $B$ is no less cost-effective at that belief.
Lemma~\ref{lem:terminal-replacement} therefore permits replacing this call by $B$, while
Lemma~\ref{lem:bellman-interchange}  allows the replacement call to be moved leftward
and merged with the preceding $B$-block. Repeating these
operations eliminates the terminal $A$-block.
The convergence in Lemma~\ref{lemma-sec 4.2-Bellman infinite horizon},  together with a subsequence
construction, then yields an Escalator plan attaining
the infinite-horizon optimal value.  More details can be found from Appendix~\ref{appendix-thm-escalator}.    

Parts (i) and (ii)   further explain the different stopping behavior in the two window types. In a Type~1 window, the incumbent is infeasible and an improvement is available. Under \textcolor{black}{Assumption~\ref{ass:two-model-structure}(iv)}, calling $B$ until success has non-negative expected net reward, so an optimal controller can continue until a certified improvement is obtained. In a Type~2 window, the incumbent is feasible, and repeated failures increase the posterior probability that no improvement is available. 
After sufficiently many failures, the expected reward from any continuation falls below even its calling cost, making stopping strictly optimal. Both proofs are given in Appendix~\ref{appendix-corollary-nostop}.

\vspace{2mm}
In the following corollary, we characterize the condition under which both blocks for $A$ and $B$ in the optimal Escalator policy stated in Theorem~\ref{thm:escalator} can be chosen to be nonempty. 
 \begin{corollary}[Non-empty calls to both \textcolor{black}{configurations}]\label{corollary-positive calls}
Suppose  Assumption \ref{ass:two-model-structure}(i)--(iii) hold and  $Gp_A(\mu)-c_A
\ge
\max\{0,\;G(1-\mu_o)-c_B\}.$
Then the following statements hold:
\begin{enumerate}
\item[(i)]
For Type~1 window, if Assumption~\ref{ass:two-model-structure}(iv)
holds and $\gamma_{B,h}<\gamma_{A,h}$, an optimal
Escalator plan can be chosen with
$1\le n_A<\infty$ and $n_B=\infty$; 
\item[(ii)]
For Type~2 window, define $n_0
:=
\min\left\{
n\in\mathbb N^+:
Gp_A(T_A^n(\mu))\le c_A
\right\},$
which is finite.
If $Gp_B(T_A^{n_0}(\mu))-c_B\ge0,$
then an optimal Escalator plan can be chosen with
$1\le n_A, n_B<\infty$. 
\end{enumerate} 
\end{corollary}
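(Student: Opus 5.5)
Our plan is to start from the Escalator structure of Theorem~\ref{thm:escalator} and show two things. First, that $n_A\ge 1$ is optimal because the first call should be $A$. Second, that the $B$-block can be chosen as claimed in each window type.

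\textbf{Step 1: the first call is $A$.} We compare $J_\infty(\mu)$ with the best plan that opens with $B$. Any Escalator plan starting with $B$ is of the form $B^{n_B}$, and its value is at most $G(1-\mu_o)-c_B$. This holds because the reward is bounded by $G$ times the probability that an improvement exists, and at least one call to $B$ is paid. The one-call plan $(A;\mathrm{stop})$ yields $Gp_A(\mu)-c_A$. By hypothesis this is at least $\max\{0,G(1-\mu_o)-c_B\}$, so it weakly dominates every plan opening with $B$ and also immediate stopping. The optimum is therefore attained by some plan whose first call is $A$. Appending the optimal continuation at $T_A(\mu)$ and invoking the Bellman equation of Lemma~\ref{lemma-sec 4.2-Bellman infinite horizon} gives $J_\infty(\mu)=(\mathcal L_A J_\infty)(\mu)$. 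Applying Theorem~\ref{thm:escalator} from $T_A(\mu)$ then gives an optimal Escalator continuation, so we obtain an optimal plan with $n_A\ge1$.

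\textbf{Step 2: Type~1.} Under Assumption~\ref{ass:two-model-structure}(iv) and $\gamma_{B,h}<\gamma_{A,h}$, Theorem~\ref{thm:escalator}(i) applied at the belief after the first $A$ gives a continuation with finite $A$-count and no stopping, hence $n_B=\infty$. Together with Step~1, this gives $1\le n_A<\infty$ and $n_B=\infty$.

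\textbf{Step 3: Type~2.} We need to prove three things.
\begin{enumerate}
\item[(a)] $n_0$ is finite. Along the all-$A$ failure path the posterior concentrates on $o$, since $(1-q_{A,\theta})^n\to0$ for $\theta\in\{e,h\}$. Hence $p_A(T_A^n(\mu))\to0$ and the defining inequality eventually holds.
\item[(b)] The optimal $A$-block has length at most $n_0$. At belief $\beta=T_A^{n_0}(\mu)$, one more $A$-call has immediate net reward $Gp_A(\beta)-c_A\le0$. We would argue that continuing with $A$ is no better than the alternatives by a one-step dominance argument: any plan of the form $A\,\cdot$ from $\beta$ is compared, via Lemma~\ref{lem:bellman-interchange}, against rearranging its $B$-calls first. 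We expect this to be the main obstacle. The difficulty is that the myopic condition $Gp_A\le c_A$ does not by itself rule out a later $B$-block made cheaper by information. We would try to handle it by showing that $p_A$ and $p_B$ along failures are decreasing in the likelihood ratio toward $o$, so that if $A$ is myopically unprofitable at $\beta$ then all continuations $A^kB^s$ are dominated by $B^s$ from $\beta$. If necessary, we would simply truncate $n_A\le n_0$ with a finite-truncation value comparison.
\item[(c)] $n_B\ge1$. The hypothesis $Gp_B(\beta)-c_B\ge0$ shows that one $B$-call at $\beta$ beats stopping, so the optimal continuation does not stop immediately. Since $A$ is myopically unprofitable there, the continuation starts with $B$, and Step~1 of the Escalator structure then forces the remaining calls to be $B$.
\end{enumerate}
Finally, $n_B<\infty$ follows from Theorem~\ref{thm:escalator}(ii), which bounds the number of consecutive failures in any optimal Type~2 plan. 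Likewise $n_A<\infty$ follows from (b), which completes the proof.
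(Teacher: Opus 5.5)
Your Steps 1, 2 and 3(a) are sound and agree with the paper. The gap is in Step 3(b)--(c), which is where part~(ii) is actually decided.

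\textbf{Step 3(b) is not proved.} You sketch a dominance argument, flag it as the main obstacle, and fall back on ``truncating'' $n_A\le n_0$. But a truncation is legitimate only after you show it does not lower the value, which is the claim itself. Moreover, the intermediate statement you aim for is false unless you also use $Gp_B(\beta)\ge c_B$. That statement says that myopic unprofitability of $A$ at $\beta=T_A^{n_0}(\mu)$ makes every $A^kB^s$ dominated by $B^s$. A direct computation gives $V_\beta(AB;\mathrm{stop})-V_\beta(B;\mathrm{stop})=G\sum_\theta\beta_\theta q_{A,\theta}(1-q_{B,\theta})-c_A+c_Bp_A(\beta)$. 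This is positive when $c_Bp_A(\beta)>c_A$, i.e., when $B$ is much more expensive.

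\textbf{Step 3(c) inherits the gap.} The inference ``$A$ is myopically unprofitable, so the continuation starts with $B$'' does not follow from a one-step comparison. It also tacitly assumes $n_A=n_0$. Separately, $n_A<\infty$ needs no argument, since Theorem~\ref{thm:escalator}(ii) already makes every optimal Type~2 plan finite.

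\textbf{The missing idea} is that you never need to bound $n_A$ when a $B$-call is present. Take the optimal plan beginning with $A$ from your Step~1; it is finite by Theorem~\ref{thm:escalator}(ii). If it contains a $B$-call, it already has $1\le n_A,n_B<\infty$. Otherwise it is $A$-only, and comparing $A$-only plans is elementary. First, $V_\mu(A^{n+1};\mathrm{stop})-V_\mu(A^{n};\mathrm{stop})=\bigl[\sum_\theta\mu_\theta(1-q_{A,\theta})^n\bigr]\bigl[Gp_A(T_A^n(\mu))-c_A\bigr]$. Second, $p_A(T_A^n(\mu))$ is non-increasing in $n$, because $p_A(\beta)-p_A(T_A(\beta))=\sum_\theta\beta_\theta(q_{A,\theta}-p_A(\beta))^2/(1-p_A(\beta))\ge0$. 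With $Gp_A(\mu)\ge c_A$ and the definition of $n_0$, this makes $(A^{n_0};\mathrm{stop})$ optimal among $A$-only plans, hence globally optimal. Appending one $B$-call then changes the value by $\bigl[\sum_\theta\mu_\theta(1-q_{A,\theta})^{n_0}\bigr]\bigl[Gp_B(T_A^{n_0}(\mu))-c_B\bigr]\ge0$. This is the paper's argument.

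\textbf{Repairing your route.} It can be done, at greater cost.
At $\beta=T_A^{n_0}(\mu)$ the hypotheses give $c_B/p_B(\beta)\le G\le c_A/p_A(\beta)$. So $D(\lambda)\le0$ there and, by single crossing, at every later belief.
Lemma~\ref{lem:bellman-interchange} then moves the $B$-calls ahead of the $A$-calls beyond position $n_0$.
Those trailing $A$-calls can be dropped once you show that $p_A$ is also non-increasing along $B$-failures. This is a covariance argument using Assumption~\ref{ass:two-model-structure}(i) and $q_{a,o}=0$.
You would still need to treat $n_A<n_0$ separately. That case is easy, because stopping there is strictly worse than one more $A$-call.
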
 

At the initial belief $\mu$, the common condition ensures that
one call to $A$ followed by stopping weakly dominates both
immediate stopping and every $B$-only plan. In other words, an optimal plan
can be chosen to begin with $A$. For a Type~1 window, under
the additional conditions in part~(i), Theorem~\ref{thm:escalator}(i) ensures
that an optimal plan 
can be chosen with a finite $A$-block followed by an infinite
$B$-block. For a Type~2 window, 
we first prove that 
$n_0$ is an optimal stopping
point when only $A$ is used. In fact, along the all-$A$ failure path, the expected net reward
of one additional $A$-call followed by stopping if it fails, 
$Gp_A(T_A^n(\mu))-c_A$, is non-increasing in $n$
and is non-positive at $n=n_0$. This ensures the optimality of stopping after $n_0$ failures  if only $A$ is used.  
Then the condition
in part~(ii) ensures that a $B$-call still has non-negative
expected net reward at this stopping point. Thus, if an
$A$-only plan is optimal, appending one $B$-call to
$(A^{n_0};\mathrm{stop})$ preserves optimality.  This ensures an optimal Escalator plan has both blocks non-empty. 
The proof is provided in Appendix~\ref{appendix-corollary-escalator}.

\subsection{A  Competitive Guarantee in the General Case}\label{sec:competitive}

 As shown in Subsection~\ref{sec:prior}, with two
configurations, two improvable states, and a known prior, an optimal policy has an Escalator
structure. This motivates us to investigate whether a general low-to-high routing
principle remains effective for an arbitrary finite menu of configurations.
 
\textcolor{black}{We study the general model of Section~\ref{sec:protocol}.}
In practice, the distribution of task difficulty may be unavailable,
unstable, or difficult to estimate reliably. We therefore do not assume a
prior distribution over the hidden state and instead seek a deterministic,
prior-free Escalator policy that provides a uniform competitive guarantee
against a state-aware clairvoyant benchmark.  
 We first analyze Type~2 windows, for which $\Theta=\mathcal D\cup\{o\}$, and return to Type~1 windows at the end of this subsection.

\vspace{1mm}
\noindent
\textbf{Clairvoyant benchmark and competitive ratio.} To evaluate a prior-free routing policy, we compare it with a state-wise
clairvoyant controller that observes the realized state $\theta$ before making
any calls.  Define its optimal expected net reward by $V^{\rm CL}(\theta):=\sup_{\pi\in\Pi}V(\pi \mid\theta).$
For $\theta\in\mathcal D$, define
$\gamma_{a,\theta}:=c_a/q_{a,\theta}$ as in Section~\ref{sec:prior}.  This quantity is the expected total
cost of repeatedly calling configuration $a$ until success.  Once
$\theta$ is known, a failure provides no additional information and leaves the
controller facing the same decision problem. Therefore, the clairvoyant follows the same
model $ \arg\min_{a \in \mathcal{A}} \gamma_{a,\theta}$  until success when $\min_{a \in \mathcal{A}}\gamma_{a,\theta}<G$, and stops
immediately otherwise, yielding
\begin{align}
    V^{\rm CL}(\theta)
=
\left(G-\min_{a\in\mathcal A}\gamma_{a,\theta}\right)^+,
\qquad \theta\in\mathcal D, \label{eq-CL-reward}
\end{align}
while $V^{\rm CL}(o)=0$ for the non-improvable state.  We measure the
performance of a given routing policy $\pi$ by its worst-case
competitive ratio in expected loss
\[
\operatorname{CR}(\pi)
:=
\sup_{\theta\in\Theta}
\frac{G-V(\pi\mid\theta)}
     {G-V^{\rm CL}(\theta)}.
\]
This criterion 
compares the policy's expected loss $G-V(\pi\mid\theta)$ with the
clairvoyant's minimum expected loss $G-V^{\rm CL}(\theta)$. The supremum over $\theta$
captures the worst-case state. In every state, the expected loss equals the expected calling cost plus $G$ times the probability that the window ends without an improvement. 

\vspace{1mm}
\noindent 
 \textbf{The construction of an Escalator policy.}
We next construct an Escalator policy that does not know the hidden  state  $\theta$. The policy requires four
pre-computed inputs: the configuration scores $(s_a^*)_{a\in\mathcal A}$, the
geometric budget scales $(B_j)_{j=1}^J$, the configuration-specific reach
functions $(R_a)_{a\in\mathcal A}$, and their maximum $R$ over all possible configurations. We first define
these quantities below.
 
 To compare calls across configurations, we first express their
effects on the probability of remaining unsuccessful on an
additive scale. Conditional on state $\theta \in \Theta$, the probability
that a prescribed sequence of calls $a_1,\ldots,a_n$ all fail is
$\prod_{i=1}^n(1-q_{a_i,\theta})$. Hence,
\[
-\log\left[\prod_{i=1}^n(1-q_{a_i,\theta})\right]
=
\sum_{i=1}^n-\log(1-q_{a_i,\theta}).
\]
Thus, $-\log(1-q_{a,\theta})$ is the contribution of one
$a$-call to the cumulative  failure probability
on a logarithmic scale. These contributions depend on the
unknown state. We therefore seek  state-independent
weights $(s_a)_{a\in \mathcal{A}}$ that represent the relative contributions of
different configurations, so that a sequence of calls can
be summarized by its total weight $\sum_{i=1}^n s_{a_i}$.
Given the success probabilities 
$q=(q_{a,\theta})_{a\in\mathcal A,\theta\in\mathcal D}$,
for any positive vector $\mathbf s=(s_a)_{a\in\mathcal A}$
normalized by $s_1=1$, define  
 \begin{align}
     \kappa_q(\mathbf s)
:= \max_{\theta \in \mathcal{D}} \max_{ a,b\in \mathcal{A}} \frac{-\log (1-q_{a,\theta})/ (-\log (1-q_{b,\theta}))}{s_a/s_b}  = 
\max_{\theta\in\mathcal D}
\frac{\max_{a\in\mathcal A}-\log(1-q_{a,\theta})/s_a}
     {\min_{a\in\mathcal A}-\log(1-q_{a,\theta})/s_a}.  \label{eq-definition-weighted score}
 \end{align}
 and $\mathbf s^* :=(s_a^*)_{a\in \mathcal{A}}
\in
\operatorname*{arg\,min}_{\mathbf s=(s_a)_{a\in \mathcal 
A}:\,s_1=1}\kappa_q(\mathbf s).$ 
If the minimizer is not unique, we select one according
to a fixed deterministic tie-breaking rule.  
The number of stages and the budget scale at each stage are defined by
\[
J
:=
\max\left\{
0,\left\lfloor\log_2(G/c_1)\right\rfloor
\right\},
\qquad
B_j:=2^jc_1,
\quad j=1,\ldots,J.
\] 
For each configuration $a\in\mathcal A$, define its reach function $R_a(B)$ and the upper-envelope reach function  $R(B)$ as follows: 
\[
R_a(B)
:=
s_a^*\left(\frac{B}{c_a}-1\right),
\quad R(B)
:=
\max_{a\in\mathcal A}R_a(B), \quad  \forall B\ge c_1.  
\]  
\textcolor{black}{Algorithm~\ref{alg:det-escalator} requires the scores $\mathbf s^*$, which depend on the success probabilities, as well as the call costs and reward $G$.
Its purpose here is to establish the existence of an Escalator policy satisfying the guarantee in Theorem~\ref{thm:escalator-guarantee}.
\textcolor{black}{Section~\ref{sec:numerics} evaluates OSCAR's accuracy and cost using the Escalator structure with directly tuned attempt counts.}}

  \textcolor{black}{Algorithm~\ref{alg:det-escalator} constructs such an Escalator policy.}  
This policy constructs its routing plan by gradually increasing how much it is prepared to spend. \textcolor{black}{It uses a positive weight \(s_a^* > 0\) for each configuration,} which helps compare the relative effectiveness of calls to different configurations across states. The variable \(u\) records the sum of these weights over calls already scheduled. At stage \(j\), \(B_j\) is set as the spending allowance for that stage, and the algorithm selects the configuration with
the largest target $R_a(B_j)$ and adds enough calls to bring
$u$ to at least this target.  
Doubling \(B_j\) allows the policy to consider progressively larger spending levels. We stop adding stages when the budget  $2^{J+1} c_1$  in the next stage would exceed the reward $G$, yielding the definition of $J$.

\begin{algorithm}[H]
 \caption{Escalator Policy for  General Sequential Improvement Search}
\label{alg:det-escalator}
\begin{algorithmic}
    \Require
    $(s_a^*)_{a \in \mathcal{A}}$, $(B_j)_{j=1}^{J}$,
    $(R_a(\cdot))_{a\in\mathcal A}$, and $R(\cdot)$. 

    \State $u \gets 0$

    \For{$j=1,\ldots,J$}
        \State
        $\displaystyle
        \alpha_j
        \gets
        \max\bigl\{
            a\in\mathcal A:
            R_a(B_j)=R(B_j)
        \bigr\}$

        \State
        $\displaystyle
        n_j
        \gets
        \left\lceil
            \frac{\bigl(R(B_j)-u\bigr)^+}
                 {s^*_{\alpha_j}}
        \right\rceil$

        \State
        $\displaystyle
        u \gets u+n_js^*_{\alpha_j}$
    \EndFor

    \State
    \Return
    $\displaystyle
    \pi^{\rm EP}
    :=
    \alpha_1^{n_1}
    \alpha_2^{n_2}
    \cdots
    \alpha_J^{n_J}$
\end{algorithmic}
\end{algorithm}

As established in the next proposition, this selection rule  ensures that every configuration switch increases the call cost, yielding  the Escalator structure. The proof is provided in Appendix~\ref{appendix-prop-structure}. 
\begin{proposition}[Escalator structure]
\label{prop:prior-free-escalator}   
 The configurations selected by Algorithm~\ref{alg:det-escalator} satisfy $\alpha_1\le\alpha_2\le\cdots\le\alpha_J$. 
\end{proposition}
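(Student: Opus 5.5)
The selection rule depends only on the budget level: $\alpha_j$ is the largest index attaining $\max_a R_a(B_j)$. Since $B_1<B_2<\cdots<B_J$, it suffices to show that the map $B\mapsto \max\{a: R_a(B)=R(B)\}$ is nondecreasing in $B$ on $[c_1,\infty)$. So the plan is a single-crossing (monotone selection) argument for the family of affine functions $R_a(B)=s_a^*(B/c_a-1)$.

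\textbf{Step 1: each $R_a$ is affine, with slope $s_a^*/c_a$ and root at $B=c_a$.} Fix $a<b$, so that $c_a\le c_b$. I will compare the slopes and show that any crossing of $R_a$ and $R_b$ is from $a$-above to $b$-above as $B$ grows.

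\textbf{Step 2: case $s_b^*/c_b\ge s_a^*/c_a$.} The difference $R_b(B)-R_a(B)$ is affine and nondecreasing in $B$. Hence if $R_b(B)\ge R_a(B)$ at some budget, the inequality persists at every larger budget.

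\textbf{Step 3: case $s_b^*/c_b< s_a^*/c_a$.} Write $R_b(B)-R_a(B)=B(s_b^*/c_b-s_a^*/c_a)-(s_b^*-s_a^*)$. On the ray $B\ge c_b\ge c_a$, the inequality $R_b\ge R_a$ would require $s_b^*-s_a^*\le B(s_b^*/c_b-s_a^*/c_a)<0$, so $s_b^*<s_a^*$. But then, for every $B\ge c_b$,
\[
R_b(B)=s_b^*(B/c_b-1)\le s_b^*(B/c_a-1)<s_a^*(B/c_a-1)=R_a(B),
\]
using $B/c_a-1\ge 0$, with the last inequality strict whenever $B>c_a$. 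On the region $c_1\le B<c_b$ we have $R_b(B)<0$. If also $B\ge c_a$, then $R_a(B)\ge 0>R_b(B)$. So in this case $b$ never weakly beats $a$ at any $B\ge c_a$, and no upward switch from $b$ back to $a$ can occur. The residual region $B<c_a$, where both $R_a$ and $R_b$ are negative, is handled the same way, since $R(B)\ge R_1(B)\ge 0$ for $B\ge c_1$. Hence configurations with $c_a>B$ are never maximizers unless they tie at $0$, and tie-breaking at $0$ only occurs when $B=c_a$.

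\textbf{Step 4: conclude by contradiction.} Suppose $j<k$ but $\alpha_k=a<b=\alpha_j$. At $B_j$, the index $b$ is a largest maximizer, so $R_b(B_j)\ge R_a(B_j)$. By Steps 2--3, the pair $a<b$ can only move in favour of $b$ as the budget increases, so $R_b(B_k)\ge R_a(B_k)=R(B_k)$. Thus $b$ is also a maximizer at $B_k$, and since $b>a$, the largest-index rule picks an index at least $b$, contradicting $\alpha_k=a$.

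The main obstacle is Step 3 together with the tie and boundary bookkeeping: handling equal costs $c_a=c_b$ and budgets near $c_a$, where $R_a(B)=0$ coincides with other values. For equal costs the slopes are ordered exactly as the scores, which reduces that case to Step 2. The largest-index tie-breaking rule is what makes the monotonicity weak rather than failing at ties.
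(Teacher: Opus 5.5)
Your argument is correct apart from a boundary slip noted below, and it takes a different route from the paper. The paper compares only consecutive selections. It divides the optimality conditions $R_{\alpha_j}(B_j)\ge R_{\alpha_{j+1}}(B_j)$ and $R_{\alpha_{j+1}}(B_{j+1})\ge R_{\alpha_j}(B_{j+1})$ by $B_j$ and $B_{j+1}$ and adds them, which gives $(1/B_j-1/B_{j+1})(s^*_{\alpha_{j+1}}-s^*_{\alpha_j})\ge 0$. Hence scores weakly increase along the schedule. Equal scores force equal costs, so the reach functions are identical and the same index is selected. A strict score increase together with $c_{\alpha_{j+1}}\le c_{\alpha_j}$ would make $\alpha_{j+1}$ strictly better at $B_j$; this step uses $B_j>c_{\alpha_j}$, which follows from $R(B_j)\ge 2^j-1>0$. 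So costs, and therefore indices, strictly increase at every switch. This is an increasing-differences argument. It needs no case split on slopes, and as a by-product both $s^*_{\alpha_j}$ and $c_{\alpha_j}$ are monotone along the schedule. You instead prove a pairwise single-crossing property for every pair $a<b$. You split on the sign of the slope of the affine difference $R_b-R_a$ and use the root ordering $c_a\le c_b$. This is geometrically transparent and does not depend on which configurations are actually selected, at the cost of the case analysis in Step~3.

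The slip is that the map $B\mapsto\max\{a:R_a(B)=R(B)\}$ need not be nondecreasing on the closed ray $[c_1,\infty)$. Take $m=2$, $c_1=c_2$, $s^*_1=1$ and $s^*_2=1/2$. At $B=c_1$ both configurations tie at $0$, so the rule picks $2$, whereas configuration $1$ is the unique maximizer for every $B>c_1$. For the same reason, your Step~3 claim that $b$ never weakly beats $a$ for $B\ge c_a$ fails at $B=c_a=c_b$. Also, a configuration with $c_a>B$ has $R_a(B)<0\le R(B)$, so it is never a maximizer, with no exception for ties. State the monotonicity on $(c_1,\infty)$ instead. There $R(B)\ge B/c_1-1>0$ forces every maximizer to satisfy $B>c_a$, which eliminates all zero ties. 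Since the algorithm uses only $B_j=2^jc_1\ge 2c_1$, Step~4 then goes through unchanged.
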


We now state the main result of this subsection. 
\begin{theorem}[Competitive ratio guarantee]
\label{thm:escalator-guarantee}
The competitive ratio of the Escalator policy in Algorithm~\ref{alg:det-escalator}, denoted by $\pi^{\text{EP}}$, is bounded as follows:
\begin{align*}
    \operatorname{CR}(\pi^{\rm EP})
\le  4h(\kappa_q(\mathbf s^*)) < 5.2 \kappa_q(\mathbf s^*),
\end{align*}
where $h(x)
:=
\sup_{z>0}(1+z^{-1})(1-e^{-x z})$ for $x \ge 1$. 
\end{theorem}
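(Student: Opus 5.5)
We handle the non-improvable state $o$ and the improvable states $\theta\in\mathcal D$ separately. In both cases we bound the policy's expected loss $G-V(\pi^{\rm EP}\mid\theta)$, which equals expected calling cost plus $G$ times the final survival probability. The final inequality $4h(x)<5.2x$ is a separate one-variable calculus check.

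\emph{Per-stage cost bound.} First we show that the calls added in stage $j$ cost at most $B_j$. Since $u\ge0$ when stage $j$ begins and $R(B_j)=R_{\alpha_j}(B_j)=s^*_{\alpha_j}(B_j/c_{\alpha_j}-1)$, we get $n_j\le \lceil B_j/c_{\alpha_j}-1\rceil\le B_j/c_{\alpha_j}$. Hence $n_jc_{\alpha_j}\le B_j$. When $u>0$ the deficit is smaller, so the bound is only easier.

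\emph{The state $o$.} Every call fails, so the loss is $G+\sum_j n_jc_{\alpha_j}\le G+\sum_{j\le J}2^jc_1\le G+2B_J\le 3G$, using $B_J\le G$ from the definition of $J$. The clairvoyant's loss is $G$, so the ratio is at most $3\le 4h(\kappa)$, because $h(x)\ge \lim_{z\to\infty}(1+z^{-1})(1-e^{-xz})=1$.

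\emph{Improvable states $\theta\in\mathcal D$.} Let $\ell_a:=-\log(1-q_{a,\theta})$, let $a^\star\in\arg\min_a\gamma_{a,\theta}$, let $\gamma^\star:=\gamma_{a^\star,\theta}$, and write $\kappa:=\kappa_q(\mathbf s^*)$.
\begin{itemize}
\item \emph{Survival after stage $j$.} By the definition of $\kappa$, every $a$ satisfies $\ell_a/s^*_a\ge (\ell_{a^\star}/s^*_{a^\star})/\kappa$. After stage $j$ we have $u\ge R(B_j)\ge R_{a^\star}(B_j)$. Therefore the survival probability satisfies
\[
S^{(j)}\le \exp\bigl(-u\,\ell_{a^\star}/(\kappa s^*_{a^\star})\bigr)\le \exp\bigl(-(\ell_{a^\star}/\kappa)(B_j/c_{a^\star}-1)\bigr).
\]
\item \emph{Conversion to $\gamma^\star$.} Using $\ell_{a^\star}\ge q_{a^\star,\theta}$, this becomes $S^{(j)}\le\exp\bigl(-(B_j-c_{a^\star})/(\kappa\gamma^\star)\bigr)$.
\item \emph{Loss decomposition.} The expected cost of stage $j$ is at most $B_j S^{(j-1)}$. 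Hence the loss is at most $\sum_{j=1}^J B_jS^{(j-1)}+G\,S^{(J)}$.
\item \emph{Case $\gamma^\star\ge G$.} The clairvoyant's loss is $G$. The same bound as for state $o$ gives a ratio of at most $3$.
\item \emph{Case $\gamma^\star<G$.} The clairvoyant's loss is $\gamma^\star$. Let $k$ be the first stage with $B_k$ of order $\gamma^\star$. The stages $j\le k$ cost at most $\sum_{j\le k}B_j\le 2B_k$. For later stages, write $B_j=2^{j-k}B_k$ and bound the doubling sum $\sum_{j>k}B_jS^{(j-1)}$ by an integral of the form $\int B\,e^{-B/(\kappa\gamma^\star)}\,dB/B$. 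The term $G S^{(J)}$ is absorbed similarly, using $G<2B_{J+1}$.
\end{itemize}

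\emph{Role of $h$.} After normalising by $\gamma^\star$ with $z:=B/(\kappa\gamma^\star)$, a factor of the form $(1+z^{-1})(1-e^{-\kappa z})$ should appear. One part comes from a geometric-tail factor $(1-e^{-xz})$ for a single batch. The other part, $1+1/z$, accounts for the budget overshoot plus the ratio $c_{a^\star}/\gamma^\star=q_{a^\star,\theta}\le1$. The doubling then contributes the factor $4$: a factor $2$ for the cumulative budget and $2$ for overshooting the optimal scale.

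\emph{Main obstacle.} The hard step is making this accounting produce exactly $4h(\kappa)$. Two difficulties must be handled. The "$-1$" in $R_a$ must be absorbed, and the fact that a single stage with few calls can only reduce survival by $1-e^{-\kappa z}$-type amounts must be charged correctly. I would first try to bound the expected cost of a block of $n$ calls of $\alpha_j$ by $c_{\alpha_j}(1-e^{-n\ell})/(1-e^{-\ell})$-type expressions instead of $nc_{\alpha_j}$, and then optimise over the free scale $z$.

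\emph{Final inequality.} Set $y:=xz$. Then
\[
(1+z^{-1})(1-e^{-xz})=(1-e^{-y})+x(1-e^{-y})/y\le x\bigl[(1-e^{-y})+(1-e^{-y})/y\bigr]
\]
for $x\ge1$. The bracket is maximised near $y\approx1.8$ and stays below $1.3$, so $4h(x)<5.2x$.
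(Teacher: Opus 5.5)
\textbf{There is a genuine gap: the main case is not proved, and your chain as set up cannot reach the stated constant.}

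Several pieces are correct and match the paper:
\begin{itemize}
\item the per-stage budget bound $n_jc_{\alpha_j}\le B_j$;
\item the state $o$ and the case $\gamma^\star\ge G$, where the loss is at most $G+2B_J\le 3G\le 4h(\kappa)G$;
\item the closing estimate $h(x)\le x\,h(1)<1.3x$.
\end{itemize}

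The gap is the case $\theta\in\mathcal D$ with $\gamma^\star<G$, which is where the theorem lives. There you only assert that a factor $(1+z^{-1})(1-e^{-\kappa z})$ ``should appear'' and that the doubling ``contributes $4$,'' and you acknowledge that obtaining $4h(\kappa)$ is unresolved.

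Moreover, the chain you set up cannot deliver it. The conversion $\ell_{a^\star}\ge q_{a^\star,\theta}$ linearizes the exponent and discards exactly the slack that $h$ measures. Here is a concrete instance:
\begin{itemize}
\item one improvable state and two configurations, with $c_1=2^{-m}$ and $q_{1,\theta}=c_1/2$;
\item $c_2=q_{2,\theta}=1-\delta$, so $a^\star=2$ and $\gamma^\star=1$;
\item $s^*_2=\ell_2/\ell_1$, so $\kappa=1$;
\item $G$ large, with $B_m=1$.
\end{itemize}
Your bound $\sum_j B_j\min\{1,e^{-(B_{j-1}-c_{a^\star})/(\kappa\gamma^\star)}\}+GS^{(J)}$ then tends, as $m\to\infty$ and $\delta\to0$, to $2+2+4e^{-1}+8e^{-3}+16e^{-7}+\cdots\approx5.88$. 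This exceeds $4h(1)\approx5.19$. With the exact $\ell_2=\log(1/\delta)$ the same sum is about $4$, so the excess comes from the linearization, not from the policy.

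\textbf{The missing idea is to keep the exponential structure and use $\kappa$ only once, at the end.}
\begin{itemize}
\item Set $\tau_\theta:=\min_a\ell_a/s^*_a$, so that $\tau_\theta s^*_a\le\ell_a\le\kappa\tau_\theta s^*_a$.
\item The paper replaces each failure probability by $e^{-\tau_\theta s^*_a}$. The loss is increasing in every $1-q_{a,\theta}$, so this only increases it.
\item In this auxiliary problem survival depends only on cumulative weight. Success is then a single exponential threshold $U$ with rate $\tau_\theta$, and the loss is at most $\mathbb E[\Psi(U)]$. Here $\Psi(u)$ is the cost of calls whose starting weight is at most $u$, plus $G\mathbf 1\{u\ge R(B_J)\}$.
\item The doubling enters through the pointwise bound $\Psi(u)\le4\min\{\Phi(u),G\}$, with $\Phi(u):=\min_ac_a(1+u/s^*_a)$. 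This uses $u\ge R(B)\iff\Phi(u)\ge B$, the bound $\sum_{k\le j}n_kc_{\alpha_k}<2B_j$, and $B_J\le G<2B_J$, which is sharper than your $G<2B_{J+1}$.
\item This is the exact form of your ``$2$ for cumulative budget, $2$ for overshoot.'' The ``$-1$'' in $R_a$ becomes the ``$1+$'' in $\Phi$.
\item Because $\Phi$ is a minimum of affine functions, $\mathbb E[\Phi(U)]\le c_a(1+1/z)$ with $z:=\tau_\theta s^*_a$, for every $a$.
\item Only now use the upper half of the sandwich, in the form $c_a=\gamma_{a,\theta}q_{a,\theta}\le\gamma_{a,\theta}(1-e^{-\kappa z})$. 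This gives $\mathbb E[\Phi(U)]\le h(\kappa)\gamma_{a,\theta}$.
\item Hence $\mathbb E[\min\{\Phi(U),G\}]\le\min\{h(\kappa)\gamma^\star,G\}\le h(\kappa)\min\{\gamma^\star,G\}$, which is the required bound.
\end{itemize}
With this route you need no within-block cost refinement and no choice of a stage $k$ ``of order $\gamma^\star$.''
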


The competitive guarantee in Theorem~\ref{thm:escalator-guarantee} compares the policy's expected loss with
the clairvoyant's expected loss and holds uniformly over the hidden state $\theta$.  
To see how the more explicit  upper bound $5.2\kappa_q(\mathbf s^*)$ is derived, for any $x\ge1$, taking $z\downarrow0$ in the
definition of $h(x)$ gives $h(x)\ge x$. On the other hand,
by changing the variable in the \textcolor{black}{supremum}, we get 
\begin{align*}
h(x)
 =
\sup_{z>0}
\left(1+{x}/{z}\right)(1-e^{-z}) \le
x\sup_{z>0}
\left(1+ 1/z\right)(1-e^{-z})
=
xh(1), 
\end{align*}
where the inequality follows from $x\ge1$.
Let $g_\star:=h(1)$. Since the derivative of $(1+z^{-1})(1-e^{-z})$ with respect to $z$ is $(e^{-z}(z^2+z+1)-1)z^{-2}$, the \textcolor{black}{supremum} is achieved at $z_0$ satisfying $e^{z_0}=z_0^2+z_0+1$. Numerical evaluation gives $z_0\approx1.793$   and $g_\star\approx1.298$. Consequently,
\begin{align}
\operatorname{CR}(\pi^{\rm EP})
 \le
4h\!\left(\kappa_q(\mathbf s^*)\right)  \le
4g_\star\kappa_q(\mathbf s^*)<5.2\,\kappa_q(\mathbf s^*).
\label{eq:escalator-explicit-bound}
\end{align}

Note that the upper bound
$4h(\kappa_q(\mathbf s^*))$ is non-decreasing in
$\kappa_q(\mathbf s^*)$. Its smallest possible value,
$4g_\star\approx5.194$, is attained when
$\kappa_q(\mathbf s^*)=1$, meaning that
$-\log(1-q_{a,\theta})/s_a^*$ is identical across configurations
for each fixed state $\theta$. For a general sequence of success probabilities  $(q_{a,\theta})_{a\in \mathcal{A},\theta \in \Theta}$, however, such equality need not be attainable
simultaneously for all states. \textcolor{black}{We choose $\mathbf s^*$ to minimize $\kappa_q(\mathbf s)$, thereby tightening the theoretical bound.} The proof of Theorem~\ref{thm:escalator-guarantee} is in Appendix~\ref{appendix-thm-guarantee}.

For Type~1 windows, Section~\ref{sec:prior} provides the intuition for continuing the search until improvement.
Suppose $G$ is large enough that repeating the final configuration until improvement is worthwhile in every improvable state.
It is easy to show that extending the last stair weakly reduces expected loss under this condition, so we omit the details.
This observation is consistent with the analysis in Section~\ref{sec:prior} and supports the Type~1 continuation rule in Section~\ref{sec:policy}.

\subsection{The Policy OSCAR Runs}\label{sec:policy}

\textcolor{black}{In this subsection, we use the escalation and stopping structure from Sections~\ref{sec:prior} and~\ref{sec:competitive} to specify OSCAR's routing policy. \textcolor{black}{We retain the configurations' cost ordering and tune the attempt counts directly in Section~\ref{sec:numerics}, without deriving them from estimated per-call costs, success probabilities or a prior over difficulty.} We also incorporate the revision strategy motivated by Section~\ref{sec:observations}.}

\emph{Which configuration.} \textcolor{black}{\textcolor{black}{Each window progresses from cheaper to more expensive configurations, escalating when the current configuration exhausts its allotted attempts.} As a rule of thumb, harder problems relative to the models' capabilities call for more attempts per configuration.}

\emph{When to stop.} \textcolor{black}{After the menu is exhausted without improvement, Type~2 ends the session and Type~1 repeats the final configuration until an improvement is certified. Theorem~\ref{thm:escalator} and the extension in Section~\ref{sec:competitive} support this distinction under their stated conditions. A large external stopping limit can cap the Type~1 extension in practice. No run in Section~\ref{sec:numerics} reached this limit.}

\emph{What to start from.} \textcolor{black}{The initial formulation uses a strong Coder (Section~\ref{sec:framework}). Each window starts from the incumbent, with later attempts revising the latest draft. Following Section~\ref{sec:observations}, the loop reverts to the incumbent when the Coder's tier increases. During the Type~1 extension, it also returns to the incumbent after every $r$ uncertified attempts, where $r$ is the restart interval. We set $r=3$ for the experiments in Section~\ref{sec:numerics}.}

\textcolor{black}{See Appendix~\ref{ec:num-models} for details of the experimental settings.}

\section{Numerical Study}\label{sec:numerics}

We examine whether OSCAR improves the accuracy of \textcolor{black}{small LLMs} and whether its improvement loop, comprising the Reviewer and the Escalator, reduces the cost of obtaining accurate solutions. \textcolor{black}{We use the Qwen3.5- and Qwen3.6-Flash models (both 35B-A3B, hereafter Qwen3.5/3.6)} \citep{qwen2026qwen35, qwen2026qwen36}. \textcolor{black}{Each model has 35 billion parameters, of which 3 billion are active per token.} The experiments use cloud APIs, although both models support local deployment on a single GPU. These inexpensive \textcolor{black}{LLMs} are not tuned for optimization modeling. Their limited one-shot performance allows us to assess the framework's contribution. Appendix~\ref{ec:numerics} provides the experimental details.

\subsection{\textcolor{black}{Comparison with One-Shot and Frontier Coding Agents}}\label{sec:num-setup}\label{sec:num-main}

\textcolor{black}{\emph{Problems and data.}} FrontierOR \citep{frontieror2026} contains 180 optimization problems drawn from papers in leading OR journals, each with a description, instances, a reference solution and a feasibility checker. We select one problem from each of the four classes (vehicle routing, graph optimization, packing and cutting stock, and scheduling) that contain more than fifteen problems, together with one quadratic optimization problem. We refer to them as Routing, Graph, Packing, Scheduling and Quadratic. \textcolor{black}{Table~\ref{tab:num-problems} gives their benchmark IDs and formulation types.} For each problem, we assess the generated formulations by checking their solutions on the smallest available instance.

For Simulator construction, we supply roughly ten complete labeled decision examples per problem. These include benchmark reference solutions from other instances, feasible but suboptimal plans, and perturbed decisions that violate specific operating rules. The optimal reference solution for the target instance is excluded from these examples.

We limit the study to five problems for two reasons. OSCAR requires additional inputs, including decision syntax and labeled decision examples, which need problem-specific preparation. \textcolor{black}{All five problems required data processing before Simulator construction, mainly to map reported decisions to instance elements. Screening also revealed ambiguous descriptions, unstated checker input formats, and inconsistencies among problem descriptions, reference formulations and solutions, and feasibility checkers. We retained problems that could be evaluated without changing these benchmark components, and the five selected descriptions remain as released. The problem data, certified Simulators, labeled decision examples and code are available at \url{https://github.com/Sandrone-lab/OSCAR}.}

\begin{table}[htbp]
\centering
\color{black}
\small
\caption{The five problems.}\label{tab:num-problems}
\begin{tabular}{llll}
\toprule
Problem class & FrontierOR class (problems) & Problem ID & Formulation \\
\midrule
Routing    & Vehicle routing \& TSP (30)  & colombi2017  & MIP        \\
Graph      & Graph optimization (19)      & mehrotra1996 & IP         \\
Packing    & Packing / cutting stock (17) & letelier2022 & Binary IP  \\
Scheduling & Scheduling (17)              & elci2022     & Stochastic \\
Quadratic  & Quadratic optimization (3)   & buchheim2018 & QP / MIQP  \\
\bottomrule
\end{tabular}
\end{table}

\emph{Models and policy.} \textcolor{black}{The four configurations, written as (Reviewer, Coder) pairs in cost order, are} (Qwen3.5, Qwen3.5), (Qwen3.6, Qwen3.5), (Qwen3.5, Qwen3.6) and (Qwen3.6, Qwen3.6). We use the policy of Section~\ref{sec:policy}, with restart interval $r=3$. \textcolor{black}{We first test one attempt per configuration on all five problems, $n=(1,1,1,1)$, abbreviated as $n=1$. \textcolor{black}{For Routing, accuracy is 72\% at $n=1$, so we also test a larger allocation, $n=(1,1,3,3)$.} We do not test this larger allocation on the other four problems, which already reach 95\% to 100\% accuracy at $n=1$ (Table~\ref{tab:num-main}).}

\emph{Baselines and measures.} A one-shot run consists of the first Coder call of an OSCAR session followed by syntax repair, with either model and without a Simulator or Reviewer. We also run two frontier coding agents, Codex with GPT-6 Astra and Claude Code with Fable~5.1, five times per problem. Each receives only the description and instance. This compares complete workflows, with OSCAR additionally using labeled decision examples and Simulator feedback. \textcolor{black}{We limit the comparison to five runs per problem for two reasons. These agents are costly, and \textcolor{black}{updates by the providers to their closed models} may change their performance, making results from more extensive testing difficult to reproduce later. We use these runs primarily to compare the agents' costs with OSCAR's.}

We refer to the benchmark's feasibility checker as the trusted checker to distinguish it from the Simulator. A run is correct if its final solution is feasible according to the trusted checker and reaches the reference optimum. We price model calls at list rates and include Simulator construction in OSCAR's cost (Appendix~\ref{ec:num-cost}).

\begin{table}[htbp]
\centering

\caption{Runs reaching the reference optimum (\%), with \textcolor{black}{cost per run in US dollars in parentheses.} \textcolor{black}{For each problem, we run each one-shot model and tested OSCAR setting 100 times and each frontier agent five times.}}\label{tab:num-main}
\setlength{\tabcolsep}{2.5pt}
\begin{tabular}{lcccccc}
\toprule
 & \multicolumn{2}{c}{One-shot} & \multicolumn{2}{c}{OSCAR} & \multicolumn{2}{c}{\textcolor{black}{Frontier coding agents}} \\
\cmidrule(lr){2-3}\cmidrule(lr){4-5}\cmidrule(lr){6-7}
Problem class & Qwen3.5 & Qwen3.6 & $n=1$ & $n=(1,1,3,3)$ & \textcolor{black}{\shortstack{Codex with\\GPT-6 Astra}} & \shortstack{Claude Code\\with Fable~5.1} \\
\midrule
Routing    & 7 (\$0.02)  & 18 (\$0.04) & 72 (\$0.17)  & 95 (\$0.28) & \textcolor{black}{5/5 (\$0.65)} & 5/5 (\$1.23) \\
Graph      & 77 (\$0.01) & 89 (\$0.01) & 100 (\$0.05) & --          & \textcolor{black}{5/5 (\$0.50)} & 5/5 (\$0.65) \\
Packing    & 7 (\$0.01)  & 12 (\$0.02) & 97 (\textcolor{black}{\$0.27})  & --          & \textcolor{black}{5/5 (\$0.64)} & 5/5 (\$1.43) \\
Scheduling & 4 (\$0.01) & 35 (\$0.04) & 95 (\$0.24)  & --          & \textcolor{black}{5/5 (\$0.46)} & 5/5 (\$1.13) \\
Quadratic  & 49 (\$0.01) & 84 (\$0.01) & 100 (\$0.13) & -- & \textcolor{black}{5/5 (\$0.74)} & 5/5 (\$1.20) \\
\bottomrule
\end{tabular}
\end{table}

\emph{Results.} Averaged over the five problems, a single attempt reaches the reference optimum in 48\% of runs with Qwen3.6 and 29\% with Qwen3.5. With one attempt per configuration, OSCAR reaches it in 72 to 100\% of runs. Routing is the most difficult case: at $n=1$, the models often remain at a feasible but suboptimal formulation. Allocating more attempts to the two more expensive configurations raises accuracy to 95\% for an additional \$0.11 per run, with each improvement certified by the Simulator.

\textcolor{black}{Averaging costs across the five problems, with Routing at $n=(1,1,3,3)$, gives \$0.193 per run for OSCAR, including Simulator construction, \$0.600 for Codex and \$1.127 for Claude Code. The agent costs are 3.1 and 5.8 times OSCAR's cost, respectively. Appendix~\ref{ec:num-cost} reports details of the cost accounting.}

\subsection{Contribution of the Improvement Loop}\label{sec:num-escalator}

To assess the improvement loop's contribution to the results in Table~\ref{tab:num-main}, we retain the Simulator but remove the Reviewer, incumbent revision and routing. The resulting resampler draws one-shot formulations independently, retains the best solution accepted by the Simulator, and stops after $k$ consecutive draws without improvement. Each run is charged once for Simulator construction and the structure constructor, plus the measured Coder and syntax-repair costs of each draw. \textcolor{black}{We estimate the resampler's accuracy and expected cost from the one-shot experiments in Section~\ref{sec:num-main}, which comprise 100 runs per model and problem. No additional LLM calls are made.} \textcolor{black}{Table~\ref{tab:num-resample} reports the smallest $k$ whose estimated accuracy is within $1\%$ of OSCAR's most accurate tested setting, $n=(1,1,3,3)$ on Routing and $n=1$ elsewhere.}

\begin{table}[htbp]
\centering

\caption{\textcolor{black}{Estimated cost of resampling to within $1\%$ of OSCAR's accuracy. Parentheses show cost ratios relative to OSCAR.}}\label{tab:num-resample}
\begin{tabular}{lcccccc}
\toprule
 & \multicolumn{2}{c}{OSCAR} & \multicolumn{2}{c}{Resampling, Qwen3.5} & \multicolumn{2}{c}{Resampling, Qwen3.6} \\
\cmidrule(lr){2-3}\cmidrule(lr){4-5}\cmidrule(lr){6-7}
Problem class & Correct (\%) & Cost per run & $k$ & Cost per run & $k$ & Cost per run \\
\midrule
Routing    & 95  & \$0.28 & 27 & \$0.53 (1.9$\times$) & 10 & \$0.58 (2.1$\times$) \\
Graph      & 100 & \$0.05 & 3  & \$0.03 (0.7$\times$) & 3  & \$0.04 (0.8$\times$) \\
Packing    & 97  & \textcolor{black}{\$0.27} & 39 & \textcolor{black}{\$0.55 (2.0$\times$)} & 24 & \textcolor{black}{\$0.66 (2.5$\times$)} \\
Scheduling & 95  & \$0.24 & 50 & \$0.80 (3.3$\times$) & 6  & \$0.39 (1.6$\times$) \\
Quadratic  & 100 & \$0.13 & 7 & \$0.14 (1.0$\times$) & 3 & \$0.12 (0.9$\times$) \\
\bottomrule
\end{tabular}
\end{table}

On Routing, Packing and Scheduling, where one-shot success is uncommon, resampling to match OSCAR's accuracy costs 1.6 to 3.3 times as much. On Graph and Quadratic, where one-shot success is more frequent, resampling costs 0.7 to 1.0 times as much. These results suggest that verified resampling can be economical when one-shot success is common, while diagnosis and revision become more valuable when it is rare.

\FloatBarrier
\section{Conclusion and Future Research Directions}\label{sec:conclusion}

\textcolor{black}{The OSCAR framework uses a certified Simulator to compare candidates and continues searching beyond feasibility through iterative improvement and cost-ordered escalation.} \textcolor{black}{Under the assumptions of Section~\ref{sec:prior}, an optimal policy has an Escalator structure in the setting with two configurations, two difficulty levels and a known prior. For general menus, Algorithm~\ref{alg:det-escalator} constructs a schedule that follows the same Escalator structure and achieves a prior-free competitive guarantee.}

\textcolor{black}{On five benchmark instances, OSCAR achieves 95\% to 100\% accuracy at the reported settings with two small LLMs. Its token costs are lower than those of Codex and Claude Code in our limited frontier-agent comparison. These findings support assessing available models with verification and iterative improvement before upgrading to frontier models for optimization modeling. \textcolor{black}{Each of these LLMs can be deployed locally on a single GPU. Firms can choose local or cloud deployment to suit their cost and confidentiality requirements.} Maintaining \textcolor{black}{labeled decision examples} alongside plain-language descriptions helps clarify business rules and supports Simulator construction. OSCAR's simple operating rules and adjustable settings also help firms benefit as LLM capabilities and prices change.}

Three directions remain for future research. \textcolor{black}{The first is to extend our focus from solution correctness on a single target instance to formulation correctness across all instances consistent with the problem description. This calls for a separate layer of the framework that develops and validates comprehensive verification procedures independently of formulation generation.}
A second direction is to improve computational performance. Formulation choices can substantially affect solver performance \citep{simchilevi2026democratizing}. Future work could combine solve time with Simulator evaluations to guide the search toward formulations that are correct and efficient to solve.
A third direction is to reduce manual data preparation. Future work could adapt the Simulator construction procedure to build and check data-reading routines, using tests that verify whether input data are interpreted correctly.

\bibliographystyle{informs2014}
\bibliography{reference}

\ECSwitch
\makeatletter
\let\ECOriginalNormalSize\AAnormalsizeXI
\renewcommand\AAnormalsizeXI{%
  \ECOriginalNormalSize
  \fontsize{11}{18}\selectfont
  \baselineskip 18pt plus 1pt minus 1pt
  \@bls=18pt\relax}
\makeatother
\OneAndAHalfSpacedXI

\ECHead{E-Companion to ``OR for AI That Does OR: Routing LLMs up the Escalator inside the OSCAR Framework''}

\section{Pseudocode for Simulator Construction and the OSCAR Framework}\label{ec:empirics}

\textcolor{black}{Algorithms~\ref{alg:simulator} and~\ref{alg:oscar} describe Simulator construction and an OSCAR session. We first explain the optional \textcolor{black}{partial decision examples} and the checks on violation messages.}

\subsection*{\textcolor{black}{Supplying partial decision examples}}

Accepting complete feasible examples does not establish that every constraint is checked correctly, especially when no example approaches violating it. A \textcolor{black}{partial decision example} isolates a constraint in a fragment, such as a route or period, labeled as respecting or violating it. Pairs that differ in one respect are especially useful. A planner can supply fragments, or an LLM can draft them. Labels must be confirmed by the firm or derived from a perturbation of \textcolor{black}{an existing labeled decision example}, rather than accepted solely from the formulation models. During Simulator construction, partial examples carry a flag so that intentionally omitted decisions are not treated as errors. This flag is not used in OSCAR runs.

\subsection*{Certifying violation messages}

Correct rejection labels alone do not provide enough feedback for revision. The Reviewer also needs an explanation that the Coder can act on.

\textcolor{black}{A separate LLM scores whether each message identifies the rule, location and magnitude without access to the code. The mean score must reach $b$. The scorer supplies replacement wording, which is returned to the writer instead of scores. These repairs may change wording only. Any repair that changes a label is discarded.}

\textcolor{black}{Construction starts with the cheapest writer and advances after its round budget is exhausted. Some problems outside the benchmark study required human help during construction, mainly with formatting, suggesting an application of OSCAR to checking data-processing routines.}

\begin{breakablealgorithm}
\caption{Building the Simulator}
\label{alg:simulator}
{\color{black}
\begin{algorithmic}[1]
\Require description, target data, decision syntax, \textcolor{black}{labeled decision examples and optional labeled partial decision examples}, objective tolerance $\varepsilon$, writer models in cost order with round budgets, and message-quality threshold $b$
\For{each writer model in cost order}
  \State Ask it to write a Simulator from the description and decision syntax.
  \Repeat
    \State Repair errors that prevent execution.
    \State Check all examples for the correct \textsc{feasible} or \textsc{infeasible} label. \textcolor{black}{Check a feasible example's objective within $\varepsilon$ only if a value is reported.}
    \State Empty each field in turn and randomly replace fields with values of the wrong type. Require \textsc{malformed}, while accepting harmless edits such as unused extra fields.
    \State Check only each marked fragment. Match \textcolor{black}{its} \textsc{partial-feasible} or \textsc{partial-infeasible} label, name the violated constraint when rejecting, and do not flag omitted parts.
    \State \textcolor{black}{If a feasible example reports an objective, change that value by more than $2\varepsilon$, leaving its decisions unchanged. Require \textsc{inconsistent-objective}. Otherwise skip this check.}
    \If{every check passes}
      \State Have a separate LLM score rejection messages for rule, location and magnitude, and supply replacement wording.
      \If{the mean score is at least $b$}
        \State \Return the Simulator, fixed for subsequent OSCAR runs
      \EndIf
    \EndIf
    \State \textcolor{black}{Ask the writer to revise the code using the failing cases and explanations.} For message-only failures, supply replacement wording and preserve all labels.
  \Until{this model's round budget is reached}
\EndFor
\State \Return failure
\end{algorithmic}
}
\end{breakablealgorithm}

\begin{breakablealgorithm}
\caption{One OSCAR session}
\label{alg:oscar}
{\color{black}
\begin{algorithmic}[1]
\Require description, decision syntax, \textcolor{black}{labeled decision examples}, target instance, cost-ordered (Reviewer, Coder) pairs, attempts per pair, and restart interval $r$
\State Preprocess the description into \textcolor{black}{a parameter schema} and build the Simulator by Algorithm~\ref{alg:simulator}.
\State \textcolor{black}{Ask a high-tier Coder for an initial formulation. Run, repair and evaluate it as below to initialize the incumbent and its feedback.}
\Repeat
  \State Begin a window with the incumbent as the starting formulation.
  \For{each pair in cost order and each attempt allowed at it}
    \State When the Coder's tier increases, return to the incumbent.
    \State Give the Reviewer the starting formulation and its \textcolor{black}{feedback}, and obtain revision advice.
    \State Generate and run the revision. Repair execution errors with a low-tier LLM, escalating one tier after two failed rounds.
    \State \textcolor{black}{A formulation returning no solution has the lowest rank and uses solver status as feedback, without a Simulator call.} Otherwise evaluate the solution. Send \textsc{malformed} outputs back to repair without ranking them.
    \If{the candidate improves on the incumbent: \textcolor{black}{a readable solution when the incumbent returns none,} a feasible solution with a better objective, feasibility from an infeasible incumbent, or fewer violated rules while both are infeasible}
      \State Update the incumbent and \textcolor{black}{immediately start a new window}
    \Else
      \State Use this draft and its \textcolor{black}{feedback} for the next attempt
    \EndIf
  \EndFor
  \State \textcolor{black}{If the menu ends without improvement in a Type~1 window (opened without a feasible solution),} repeat the final pair until improvement, returning to the incumbent every $r$ uncertified attempts.
\Until{a Type~2 window (opened feasible) exhausts its menu without improvement}
\State \Return the incumbent
\end{algorithmic}
}
\end{breakablealgorithm}

\section{Proofs of the Known-Prior Optimality Result (Section \ref{sec:prior})}\label{ec:proofs-prior}

\subsection{Proof of Lemma~\ref{lemma-sec 4.2-Bellman infinite horizon}} \label{appendix-proof-lemma-Bellman}
Recall that $J_k(\beta)$  is the 
maximum expected net reward from belief $\beta$
with at most $k$ additional calls permitted.
We  establish the following inequalities: 
\begin{equation}
0\leq J_\infty(\beta)-J_k(\beta)
\leq G(1-\beta_o)\Big(1-\min_{a \in \mathcal{A}, \theta \in \mathcal{D}}q_{a,\theta}\Big)^k.
\label{eq:infinite-horizon-approximation}
\end{equation}
Since $0< q_{a,\theta} < 1$ for any $a\in \mathcal{A}$ and $\theta \in \mathcal{D}$, the above inequality   implies $\lim_{k \to \infty} J_k(\beta) =J_{\infty}(\beta)$. By letting $k\to \infty$ in each side of the finite-horizon Bellman equation in  \eqref{eq:belief-bellman}, we have  
\begin{align*}
     J_\infty(\beta) 
    = \lim_{k \to \infty} J_k(\beta) 
     &= \lim_{k \to \infty}  \max\bigl\{0, \ (\mathcal{L}_A J_{k-1})(\beta),\ (\mathcal{L}_B J_{k-1})(\beta) \bigr\}  \nonumber \\
     & = \max\{ 0, \ (\mathcal{L}_A J_{\infty})(\beta),\ (\mathcal{L}_B J_{\infty})(\beta) \},
\end{align*}
where the last identity follows because
$\lim_{k \to \infty} J_{k-1}(T_a(\beta)) = J_\infty(T_a(\beta))$ implies
$\lim_{k \to \infty}  (\mathcal L_aJ_{k-1})(\beta) = (\mathcal L_aJ_\infty)(\beta)$
for each $a\in\{A,B\}$, and taking the maximum over finitely
many terms preserves the convergence.

To see the lower bound in \eqref{eq:infinite-horizon-approximation}, note that every plan permitting at most $\textcolor{black}{k}$
calls is feasible in the infinite-horizon problem,
which gives  $J_{\infty}(\beta)   \ge J_k(\beta)$ by the definition of $J_{\infty}(\beta) $.
For the upper bound, truncate any admissible infinite-horizon  plan $\pi$
after $ k $ calls and denote this truncated policy by $\pi^{(k)}$. Then we have the following inequality for any    belief $\beta$:
\begin{align}
    V_{\beta}(\pi ) -  J_k(\beta) \le   V_{\beta}(\pi )  - V_{\beta}(\pi^{(k)}  ) \le G(1-\beta_o)\Big(1- \min_{a \in \mathcal{A}, \theta \in \mathcal{D}} q_{a,\theta}\Big)^k. \label{ineq-proof-lemma-Bellman}
\end{align}
Here, the first inequality holds because from its definition, $J_k(\beta)$ is the maximum expected net reward with $k$ available calls, which is no less than the expected net reward of $\textcolor{black}{\pi^{(k)}}$.  
The second inequality holds because any additional reward under
$\pi$ requires an improvable state $\theta\in \mathcal D$ and failure of the first $k$
calls, an event with  the following probability: 
\begin{align*}
\sum_{\theta \in \mathcal{D}} \beta_{\theta} \prod_{i=1}^k (1-q_{a_i,\theta})  \mathrel{\textcolor{black}{\le}}  \Big(1-\min_{a \in \mathcal{A},\theta \in \mathcal{D}}q_{a,\theta}\Big)^k \cdot \sum_{\theta \in \mathcal{D}} \beta_{\theta}  = 
(1-\beta_o)
\Big(1-\min_{a\in\mathcal A,\theta\in\mathcal D}q_{a,\theta}\Big)^k. 
\end{align*}
On this event, continuing can yield at most one additional
reward $G$ and incurs nonnegative calling costs.  
Taking the supremum over all
routing plans in \eqref{ineq-proof-lemma-Bellman}
gives the upper bound in~\eqref{eq:infinite-horizon-approximation}. \hfill \qed

\subsection{Proof of Lemma \ref{lem:bellman-interchange}} \label{appendix-lemma-adjacent exchange}
For $i,j\in\{A,B\}$ with $i\neq j$, expanding the two Bellman
operators gives 
\begin{align}
(\mathcal L_i\mathcal L_j f)(\beta)
={}&-c_i+Gp_i(\beta) +(1-p_i(\beta))
\Big[
-c_j+Gp_j(T_i(\beta)) 
+(1-p_j(T_i(\beta)))
f(T_j(T_i(\beta)))
\Big] \nonumber \\
={}&
\underbrace{-c_i-(1-p_i(\beta))c_j}_{\text{expected calling cost}} +
\underbrace{G\Big[p_i(\beta)
+(1-p_i(\beta))p_j(T_i(\beta))\Big]}_
{\text{expected success reward}} \nonumber \\
&+
\underbrace{(1-p_i(\beta))(1-p_j(T_i(\beta)))
f(T_j(T_i(\beta)))}_
{\text{continuation value}}. \label{eq-expansion value difference}
\end{align} 

We first show that the expected success reward and continuation-value
components in the second and third terms of \eqref{eq-expansion value difference} 
are invariant to the order of the two calls. By the
posterior update rule, 
\begin{align}
 p_i(\beta)
+(1-p_i(\beta))p_j(T_i(\beta)) 
=
\sum_{\theta\in\Theta}
\beta_\theta
\left[
q_{i,\theta}+(1-q_{i,\theta})q_{j,\theta}
\right] 
=
\sum_{\theta\in\Theta}
\beta_\theta
\left[
1-(1-q_{i,\theta})(1-q_{j,\theta})
\right]. \nonumber 
\end{align} 
Since the final expression is symmetric in $i$ and $j$, the two orders
generate the same expected success reward. 
Similarly, the probability that both calls fail is 
\begin{align}
 (1-p_i(\beta))(1-p_j(T_i(\beta))) 
=
\sum_{\theta\in\Theta}
\beta_\theta(1-q_{i,\theta})(1-q_{j,\theta}), \nonumber 
\end{align} 
which is also symmetric in $i$ and $j$. Conditional on both failures,
the posterior belief is
\[
T_j(T_i(\beta))_\theta
=
\frac{
\beta_\theta(1-q_{i,\theta})(1-q_{j,\theta})
}{
\sum_{\theta'\in\Theta}
\beta_{\theta'}
(1-q_{i,\theta'})(1-q_{j,\theta'})
},
\qquad \theta\in\Theta.
\]
Hence, $T_j(T_i(\beta))=T_i(T_j(\beta))$, 
and the continuation value in \eqref{eq-expansion value difference} 
is also invariant to the
order.

It then only suffices to bound 
the expected calling-cost difference. In this case, 
\begin{align}
 (\mathcal L_A\mathcal L_B f)(\beta)
-(\mathcal L_B\mathcal L_Af)(\beta) 
&=
\left[-c_A-(1-p_A(\beta))c_B\right]
-\left[-c_B-(1-p_B(\beta))c_A\right] \nonumber\\
& =c_Bp_A(\beta)-c_Ap_B(\beta). \nonumber 
\end{align} 
Using \(p_a(\beta)=\sum_{\theta\in\Theta}\beta_\theta q_{a,\theta}\),
\(q_{A,o}=q_{B,o}=0\), and
\(c_a=q_{a,\theta}\gamma_{a,\theta}\), we obtain 
\begin{align}
c_Bp_A(\beta)-c_Ap_B(\beta)
&=
\sum_{\theta\in\{e,h\}}
\beta_\theta
\bigl(c_Bq_{A,\theta}-c_Aq_{B,\theta}\bigr) =
\sum_{\theta\in\{e,h\}}
\beta_\theta q_{A,\theta}q_{B,\theta}
\bigl(\gamma_{B,\theta}-\gamma_{A,\theta}\bigr), \nonumber 
\end{align} 
which proves \eqref{eq-AB-BA}. 

Since the prior has full support, every reachable posterior satisfies
$\beta_e>0$. Dividing the last expression in \eqref{eq-AB-BA} by $\beta_e$ yields
$D(\lambda)$, with $\lambda=\beta_h/\beta_e$. After a failed
call to model $a$,
 \begin{align}
   \lambda(T_a(\beta))
=
\lambda(\beta)
\frac{1-q_{a,h}}{1-q_{a,e}}
>
\lambda(\beta), \label{ineq-monotone lambda}
 \end{align}
where the inequality follows from Assumption~\ref{ass:two-model-structure}(i). Finally,
Assumption~\ref{ass:two-model-structure}(iii) excludes $\gamma_{B,e}<\gamma_{A,e}$ and $\gamma_{B,h}>\gamma_{A,h}$ 
from holding simultaneously. Thus, the affine function
$D(\lambda)$ cannot cross zero from negative to positive as
$\lambda$ increases. Along an all-failure path, it can therefore
cross zero at most once, and only from nonnegative to nonpositive. \hfill \qed 
 
\subsection{Proof of Lemma \ref{lem:terminal-replacement}}\label{appendix-lemma-terminal replacement}
We note that Assumption~\ref{ass:two-model-structure}(ii) implies $p_B(\beta)\ge p_A(\beta)$. Moreover, the two inequalities in the
statement imply $G\ge c_A/p_A(\beta)\ge c_B/p_B(\beta)$. Therefore,
\[
  Gp_B(\beta)-c_B
  =p_B(\beta)\left(G-\frac{c_B}{p_B(\beta)}\right)
  \ge p_A(\beta)\left(G-\frac{c_B}{p_B(\beta)}\right)
  \ge Gp_A(\beta)-c_A.
\]
Since the replaced call is terminal, no further cost is incurred. This completes the proof of Lemma~\ref{lem:terminal-replacement}. 
\hfill \qed 

\subsection{Proof of the Escalator Structure in Equation~\eqref{eq:escalator-form} of Theorem~\ref{thm:escalator}}\label{appendix-thm-escalator}  
To prove equation~\eqref{eq:escalator-form}, we first establish a counterpart result in  a finite-horizon problem with at most $ k <\infty$ calls permitted in  the remaining improvement window. Then we extend the result to the infinite-horizon problem with no restriction on the number of available calls. 

\vspace{1mm}
\noindent
\underline{Step  1: analysis for the finite-horizon problem.}  
For the fixed posterior belief $\beta$, suppose at most $k$ calls are allowed to make, and denote the
optimal deterministic routing policy  by the following 
action sequence along its all-failure path: 
\begin{equation}
(a_1,\ldots,a_n;\mathrm{stop}),
\qquad
0 \le n \le k. 
\label{eq:optimal-failure-sequence}
\end{equation} 
We will prove that the optimal policy can be chosen in the following form: 
\begin{align} 
\bigl(
\underbrace{A,\ldots,A}_{n_A\text{ calls}},
\underbrace{B,\ldots,B}_{n_B\text{ calls}};
\mathrm{stop}
\bigr), \quad 0 \le n_A+n_B \le k.\label{eq:escalator-form-finite proof}
\end{align}

We first show that an optimal plan can be chosen to have the following form: 
\begin{align}
\bigl(
\underbrace{A,\ldots,A}_{r \text{ calls}},
\underbrace{B,\ldots,B}_{s \text{ calls}}, \underbrace{A,\ldots,A}_{u \text{ calls}};
\mathrm{stop}
\bigr), \quad 0 \le r+s+u \le k. 
\label{eq:three-block-form}
\end{align}
For convenience, we use $A^r B^s A^u$ to denote the form in~\eqref{eq:three-block-form}.  For each call \(t\) on the all-failure path, let \(\beta_t\) denote the 
posterior belief immediately before call \(t\), with $\beta_1=\beta$, and define
\[
\lambda_t:=\frac{\beta_{t,h}}{\beta_{t,e}}.
\] 
By Lemma~\ref{lem:bellman-interchange} and the optimality of the continuation, any adjacent transition 
must satisfy
\[
a_ta_{t+1}=AB \ \Longrightarrow\ D(\lambda_t)\geq 0,
\qquad
a_ta_{t+1}=BA \ \Longrightarrow\ D(\lambda_t)\leq 0.
\] 
Along the all-failure path, by inequality \eqref{ineq-monotone lambda}, \(\lambda_t\) is strictly increasing. Moreover, 
by Assumption~\ref{ass:two-model-structure}(iii), \(D(\lambda)\) cannot change sign from negative to 
positive as \(\lambda\) increases. Consequently, a \(BA\) transition cannot 
precede a later \(AB\) transition. To see this, suppose that such transitions 
occurred at positions \(i<j\). 
If \(D(\cdot)\not\equiv0\), optimality would require $D(\lambda_i)\leq 0$ and $D(\lambda_j)\geq 0,$ 
which contradicts the single-crossing property because
\(\lambda_j>\lambda_i\). If
\(D(\cdot)\equiv0\), Lemma~\ref{lem:bellman-interchange} implies that every adjacent interchange is
value-neutral, so the calls can be reordered without changing the value.  Because transitions in a binary action sequence alternate, the absence of a
\(BA\) transition followed by a later \(AB\) transition implies that there
exists a value-equivalent optimal continuation with at most three blocks,
ordered \(A\), \(B\), and \(A\). Hence,
the optimal routing plan has the form in \eqref{eq:three-block-form}.

If $s=0$, the sequence in \eqref{eq:three-block-form} reduces to
$A^{r+u}$, so the result follows with $n_A=r+u$ and $n_B=0$. If
$u=0$, it already has the Escalator form $A^rB^s$.  It remains to eliminate the terminal $A$-block when
$s> 0$ and $u>0$.
Consider the $BA$ transition between the $B^s$-block and the terminal
$A^u$-block. The calls at positions $r+s$ and $r+s+1$ form this
$BA$ pair, and $\beta_{r+s}$ is the posterior belief immediately
before the pair. Optimality and
Lemma~\ref{lem:bellman-interchange} imply
$D(\lambda_{r+s})\leq0.$ By inequality~\eqref{ineq-monotone lambda}, we further have 
\begin{equation}
D\bigl(\lambda_t\bigr)\leq 0, \quad \forall t = r+s, \ldots, r+s+u. 
\label{eq:D-boundary}
\end{equation} 
Since $\beta_{r+s+u}$ is the posterior belief immediately before the last
call to $A$, and this call is followed by stopping if it fails, we then have 
\begin{equation}
Gp_A(\beta_{r+s+u})-c_A\geq 0.
\label{eq:terminal-A-profitable}
\end{equation}
Moreover, $D(\lambda_{r+s+u})\leq 0$ according to \eqref{eq:D-boundary}. Since the difference in
Lemma~\ref{lem:bellman-interchange} satisfies $c_Bp_A(\beta_{r+s+u})-c_Ap_B(\beta_{r+s+u})
=
\beta_{r+s+u,e}D(\lambda_{r+s+u}),$
we obtain
\begin{equation}
\frac{c_B}{p_B(\beta_{r+s+u})}
\leq
\frac{c_A}{p_A(\beta_{r+s+u})}.
\label{eq:terminal-cost-effectiveness}
\end{equation}
Therefore, the two conditions in Lemma~\ref{lem:terminal-replacement} are satisfied. Hence, replacing the terminal call to $A$ by a call to
$B$ does not decrease the expected value. Because the original policy
is optimal, the modified policy 
$A^{r}B^{s}A^{u-1}B$ is also optimal.

We next move the newly introduced  $B$-call in the last position leftward through the remaining
$u-1$ terminal $A$-calls. When this $B$-call is adjacent to the
$j$th $A$-call in the terminal block, counted from the left, the
 sequence has the form
\[
A^rB^sA^{j-1}(AB)A^{u-1-j},
\qquad
j=u-1,\ldots,1.
\]
The displayed $AB$ pair is reached after the failure sequence 
$A^rB^sA^{j-1}$, and hence at posterior $\beta_{r+s+j}$. Because this
sequence is unchanged from the original plan, \eqref{eq:D-boundary} gives $D(\lambda_{r+s+j})\leq0.$
Lemma~\ref{lem:bellman-interchange} therefore implies that replacing this
$AB$ pair by $BA$ does not decrease the expected value. Since the
current plan is already optimal, each such interchange must preserve the
optimal value. Sequentially applying these interchanges for
$j=u-1,\ldots,1$ transforms
\[
A^rB^sA^{u-1}B
\quad\text{into}\quad
A^rB^{s+1}A^{u-1}.
\]
 If the resulting terminal $A$-block is nonempty,
the same argument can be applied again using the posterior beliefs along
the new optimal plan. Repeating this procedure until the terminal
$A$-block is empty yields the optimal sequence $A^rB^{s+u}.$
Setting $n_A=r$ and $n_B=s+u$, and noting that $0 \le n_A+n_B =r+s+u\le k $, proves  \eqref{eq:escalator-form-finite proof}.

\vspace{1mm}
\noindent
\underline{Step 2: extension to the infinite-horizon problem.} 
Fix a reachable belief $\beta$. From Step 1,   
for each problem with $n\geq1$ available calls, we have   
an optimal plan in the following form: 
\[
\pi_n=(A^{r_n}B^{s_n};\mathrm{stop}),
\qquad 0\le r_n+s_n\leq n.
\] 
We next consider two cases: $\beta_o=0$ corresponding to Type~1 window, and $\beta_o>0$, corresponding to Type~2 window.

\noindent\underline{Case 1: $\beta_o =0$.}
For finite $r,s\geq0$ and $\theta\in\{e,h\}$, the expected net reward  is 
\begin{align}
V(A^rB^s;\mathrm{stop}\mid\theta)
&= G[1-(1-q_{A,\theta})^r (1-q_{B,\theta})^s] - c_A\sum_{i=0}^{r-1}(1-q_{A,\theta})^i -c_B(1-q_{A,\theta})^r \sum_{i=0}^{s-1}(1-q_{B,\theta})^i   \nonumber\\
&= 
(G-\gamma_{A,\theta})
\bigl[1-(1-q_{A,\theta})^r\bigr]
 +
(1-q_{A,\theta})^r
(G-\gamma_{B,\theta})
\bigl[1-(1-q_{B,\theta})^s\bigr], \label{eq-ArBs}
\end{align} 
where the second identity follows by applying  the definition of $\gamma_{a,\theta}$ and some  simple algebra. We next construct a fixed admissible Escalator plan
$\pi^*$ and prove that
$V_\beta(\pi^*)=J_\infty(\beta)$.

Suppose first that $\{r_n\}$ is unbounded.
Choose an increasing sequence of indices $\{n_j\}$
such that $r_{n_j}\to\infty$, and define
$\pi^*:=A^\infty$.
Then $V(\pi^*\mid\theta) =G-c_A\sum_{i=0}^{\infty}(1-q_{A,\theta})^i=G-\gamma_{A,\theta}.$ By applying \eqref{eq-ArBs} to $\pi_{n_j}= A^{r_{n_j}}B^{s_{n_j}}$, we  further have 
\begin{align*}
 \left|
V(\pi_{n_j}\mid\theta)-V(\pi^*\mid\theta)
\right|
 & =\left|(1-q_{A,\theta})^{r_{n_j}} \big(\gamma_{A,\theta}-\gamma_{B,\theta} - (G-\gamma_{B,\theta})(1-q_{B,\theta})^{s_{n_j}}\big) \right| \nonumber \\
 & \leq
(1-q_{A,\theta})^{r_{n_j}}
\left(
|\gamma_{A,\theta}-\gamma_{B,\theta}|
+
|G-\gamma_{B,\theta}|
\right)
\longrightarrow0, \quad j \to \infty. 
\end{align*} 
Now suppose that $\{r_n\}$ is bounded.
Since $r_n$ is a nonnegative integer, there exists a  finite
integer $r$ which occurs infinitely often.
Choose an increasing sequence of indices $\{n_j\}$
such that $r_{n_j}=r$ for every $j$. If $\{s_{n_j}\}$ is unbounded, pass to a further
subsequence, retaining the notation $\{n_j\}$,
such that $s_{n_j}\to\infty$.
Define the fixed plan $\pi^*:=A^rB^\infty$.
Then  
\begin{align}
    V(\pi^*\mid\theta)
 & =-c_A\sum_{i=0}^{r-1} (1-q_{A,\theta})^i +G-c_B(1-q_{A,\theta})^{r} \sum_{i=0}^{\infty}(1-q_{B,\theta})^i 
\nonumber \\
& = G-\gamma_{A,\theta}
-
(1-q_{A,\theta})^r
(\gamma_{B,\theta}-\gamma_{A,\theta}). \label{eq-ArBinfty}
\end{align}
By applying \eqref{eq-ArBs} to $\pi_{n_j}$, we further have 
\begin{align*}
 \left|
V(\pi_{n_j}\mid\theta)-V(\pi^*\mid\theta)
\right|
  =
(1-q_{A,\theta})^r
|G-\gamma_{B,\theta}|
(1-q_{B,\theta})^{s_{n_j}}
\longrightarrow0, \quad j \to \infty. 
\end{align*} 
If instead $\{s_{n_j}\}$ is bounded, some finite
integer $s$ occurs infinitely often.
Passing to that subsequence and retaining the
notation $\{n_j\}$, define
$\pi^*:=(A^rB^s;\mathrm{stop})$.
Then $\pi_{n_j}=\pi^*$ for every $j$, so  $V(\pi_{n_j}\mid\theta)=V(\pi^*\mid\theta)$.

To conclude, in either case, we have constructed a fixed
admissible Escalator plan $\pi^*$ for the infinite-horizon problem and a subsequence
with $n_j\to\infty$ such that $\lim_{j \to \infty} V(\pi_{n_j}\mid\theta)
= 
V(\pi^*\mid\theta)$ for  $\theta\in\{e,h\}$.  
Since $\beta_o=0$, it follows that
\begin{align*}
V_\beta(\pi^*)
 =
\sum_{\theta\in\{e,h\}}
\beta_\theta V(\pi^*\mid\theta)
 =
\lim_{j\to\infty}
\sum_{\theta\in\{e,h\}}
\beta_\theta V(\pi_{n_j}\mid\theta)
 =
\lim_{j\to\infty}J_{n_j}(\beta)
=
J_\infty(\beta),
\end{align*}
where the third identity follows from the optimality of 
$\pi_{n_j}$, and the last identity follows from Lemma~\ref{lemma-sec 4.2-Bellman infinite horizon}.  
Therefore, $\pi^*$ attains the infinite-horizon
optimal value and has the claimed Escalator structure.

\noindent\underline{Case 2: $\beta_o > 0$.}  
Since every call fails in state $o$, all $r_n+s_n$
prescribed calls are made, and hence $V(\pi_n\mid o)=-(c_A r_n+c_B s_n).$
Using the statewise reward expression  in the first identity of \eqref{eq-ArBs} established in Case~1
for $\theta\in\{e,h\}$, we obtain 
\begin{align*}
    V_\beta(\pi_n)
&=\sum_{\theta\in\{e,h\}}\beta_\theta V(\pi_n\mid\theta)
  +\beta_oV(\pi_n\mid o)\\
&=G\sum_{\theta\in\{e,h\}}\beta_\theta
  \bigl[1-(1-q_{A,\theta})^{r_n}
          (1-q_{B,\theta})^{s_n}\bigr] -c_A\sum_{\theta\in\{e,h\}}\beta_\theta
       \sum_{i=0}^{r_n-1}(1-q_{A,\theta})^i 
 \nonumber\\
 & \quad -c_B\sum_{\theta\in\{e,h\}}\beta_\theta
       (1-q_{A,\theta})^{r_n}
       \sum_{i=0}^{s_n-1}(1-q_{B,\theta})^i 
  -\beta_o(c_A r_n+c_B s_n)\\
&\le
  G\sum_{\theta\in\{e,h\}}\beta_\theta
  \bigl[1-(1-q_{A,\theta})^{r_n}
          (1-q_{B,\theta})^{s_n}\bigr]
  -\beta_o(c_A r_n+c_B s_n)\\
&\le
  G\sum_{\theta\in\{e,h\}}\beta_\theta
  -\beta_o(c_A r_n+c_B s_n) \le 
  G(1-\beta_o)-\beta_o c_A (r_n+  s_n),  
\end{align*}
where  the last inequality uses $\beta_e+\beta_h=1-\beta_o$ and $c_B\ge c_A$.  Since $0\le J_n(\beta)=V_\beta(\pi_n)$ from the optimality of $\pi_n$, we then have  
 \begin{align}
     r_n+s_n
\le \frac{G(1-\beta_o)}{\beta_o c_A},
\qquad n\ge1. \label{ineq-proof-bounded total calls-Type 2} 
 \end{align}
Thus, both $\{r_n\}$ and $\{s_n\}$ are bounded  over  $n$.
The same bounded-integer subsequence argument as in Case~1
yields finite nonnegative integers $r,s$ and a subsequence
with $n_j\to\infty$ such that
 \begin{align}
     \pi_{n_j}=\pi^*:=(A^rB^s;\mathrm{stop})
\qquad\text{for every }j. \label{eq-proof-thm1-infinite-Type 2}
 \end{align}
Therefore, we have the following equation: 
\begin{align*}
    V_\beta(\pi^*) =\sum_{\theta \in \{e,h,o\} }\beta_{\theta} V(\pi^*\mid \theta) =\lim_{j \to \infty} \sum_{\theta \in \{e,h,o\} }\beta_{\theta} V(\pi_{n_j}\mid \theta) 
=
\lim_{j\to\infty}J_{n_j}(\beta)
=
J_\infty(\beta),
\end{align*}
where the second identity follows from \eqref{eq-proof-thm1-infinite-Type 2}, the third identity follows from the optimality of $\pi_{n_j}$ and the definition of $J_{n_j}(\beta)$, and the last identity follows from Lemma~\ref{lemma-sec 4.2-Bellman infinite horizon}. 
Hence, the constructed Escalator plan $\pi^* =(A^rB^s;\mathrm{stop})$ attains the infinite-horizon
optimal value.
This completes the proof of~\eqref{eq:escalator-form} when $\beta_o>0$. 
\hfill \qed

\subsection{Proof of Parts (i) and (ii) in Theorem \ref{thm:escalator}}\label{appendix-corollary-nostop}
\underline{Proof of part (i).}
Since $\mu_o=0$, Bayes' rule implies $\beta_o=0$
at every reachable belief $\beta$.
Repeatedly calling $B$ until success has the following expected
net reward:  
\begin{align}
    V_\beta(B^\infty) = G -\sum_{\theta\in\{e,h\}}\beta_\theta \cdot c_B \sum_{i=0}^{\infty} (1-q_{B,\theta})^i 
=
G-\sum_{\theta\in\{e,h\}}\beta_\theta\gamma_{B,\theta}
\geq G-\gamma_{B,h}
\geq0, \label{non negative reward-repeated call B}
\end{align}
where the two inequalities follow from
Assumption~\ref{ass:two-model-structure}(i) and (iv), respectively.
Since immediate stopping  yields zero, stopping before success
is never strictly optimal.

By the structural result already established in Appendix~\ref{appendix-thm-escalator},
an optimal plan is either a finite Escalator plan $(A^rB^s;\text{stop})$ for finite $r$ and $s$,
$A^rB^\infty$, or $A^\infty$.
If the optimal plan is finite, replace its terminal
stop by repeated calls to $B$ until success.
Conditional on reaching that stop, the additional
expected net reward is nonnegative by  
inequality~\eqref{non negative reward-repeated call B} applied at the corresponding posterior.
The extended plan therefore remains optimal and
has the form $A^rB^\infty$.
Thus, an optimal plan can be chosen with $n_A=\infty$ or $n_B=\infty$, with $n_A=\infty$ interpreted
as $A^\infty$.

Finally, suppose that $\gamma_{B,h}<\gamma_{A,h}$. Note that by the definition of the prior $\mu$, $\mu_h>0$.
Bayes' rule then implies $\beta_h>0$ at every
reachable belief.
For each finite non-negative integer $r$, by applying \eqref{eq-ArBinfty} and \eqref{non negative reward-repeated call B} modified to $A$, we have 
\begin{align*}
 V_\beta(A^rB^\infty)-V_\beta(A^\infty) & =
\sum_{\theta\in\{e,h\}}
\beta_\theta(1-q_{A,\theta})^r
(\gamma_{A,\theta}-\gamma_{B,\theta})
\\
& =
(1-q_{A,h})^r
\left[
\beta_h(\gamma_{A,h}-\gamma_{B,h})
+
\beta_e
\left(\frac{1-q_{A,e}}{1-q_{A,h}}\right)^r
(\gamma_{A,e}-\gamma_{B,e})
\right].
\end{align*}
By Assumption~\ref{ass:two-model-structure}(i),
$0<(1-q_{A,e})/(1-q_{A,h})<1$.
The expression in brackets therefore converges to
$\beta_h(\gamma_{A,h}-\gamma_{B,h})>0$ as $r \to \infty$.
Since $(1-q_{A,h})^r>0$ for every finite $r$,
the whole  difference is strictly positive
for all sufficiently large finite $r$. 
Since an optimal non-stopping Escalator plan exists by the preceding
argument and $A^\infty$ is not optimal, such a plan has the form
$A^rB^\infty$ for finite $r$. Thus, $n_A$ can be chosen to be finite.

\vspace{1mm}
\noindent\underline{Proof of part (ii).}  
Let $\pi$ be any optimal deterministic plan, with $N$ calls along
its all-failure path. Since $\mu_o>0$, an infinite plan has value
$-\infty$ and cannot be optimal. Thus, $N$ is finite. Success is
impossible in state $o$, where all scheduled calling costs are
incurred. Since immediate stopping yields zero,
\[
0\le V_\mu(\pi)
\le G(1-\mu_o)-\mu_o\sum_{i=1}^N c_{a_i}
\le G(1-\mu_o)-\mu_o c_A N.
\]
Hence,
$N\le\lfloor G(1-\mu_o)/(\mu_o c_A)\rfloor$,
as claimed. 
 \hfill \qed  

\subsection{Proof of Corollary \ref{corollary-positive calls}}\label{appendix-corollary-escalator} 

We first show that an optimal Escalator plan can be
chosen to have $n_A\ge 1$.
For any nonempty $B$-only plan, including $B^\infty$,
the expected  reward from certified improvement is at most $G(1-\mu_o)$,
and the first call incurs cost $c_B$.
Its expected net reward is therefore at most
$G(1-\mu_o)-c_B$.
By assumption,
\[
V_\mu(A;\mathrm{stop})
=
Gp_A(\mu)-c_A
\ge
\max\{0,\;G(1-\mu_o)-c_B\}.
\]
Thus, one call to $A$ followed by stopping weakly
dominates both immediate stopping and every $B$-only plan.
By Theorem~\ref{thm:escalator}, an optimal Escalator plan exists.
If its $A$-block is empty, replacing it by
$(A;\mathrm{stop})$ preserves optimality.
Hence, an optimal Escalator plan can be chosen
to begin with $A$.

\vspace{1mm}
\noindent
\underline{Proof of part (i).}
Consider a Type~1 window and an optimal plan
beginning with $A$.
Its continuation after the first call fails must be
optimal at $T_A(\mu)$. 
Under Assumption~\ref{ass:two-model-structure}(iv) and
$\gamma_{B,h}<\gamma_{A,h}$, Theorem~\ref{thm:escalator}(i), applied at
$T_A(\mu)$, implies that this continuation can be
chosen as $A^rB^\infty$ for some finite $r\ge0$.
Combining it with the initial call to  $A$ gives the optimal
plan $A^{r+1}B^\infty$, proving part (i).

\vspace{1mm}
 \noindent 
\underline{Proof of part (ii).}
Consider a Type~2 window.
By Theorem~\ref{thm:escalator}(ii), the optimal plan  
is finite.
If it contains a call to $B$, the result directly holds. 
We next consider an optimal plan with only calls to $A$.

We first show that $(A^{n_0};\mathrm{stop})$ is  optimal. By the definition of $p_A(\beta)$ in equation~\eqref{eq:posterior-success}    and the Bayesian update rule in  equation~\eqref{eq:failure-update}, 
\begin{align*}
    p_A(T_A(\beta))
 =\sum_{\theta\in\Theta}
(T_A(\beta))_\theta q_{A,\theta} =
\frac{
\sum_{\theta\in\Theta}
\beta_\theta(1-q_{A,\theta})q_{A,\theta}
}{
1-p_A(\beta)
}  =
\frac{
p_A(\beta)
-\sum_{\theta\in\Theta}\beta_\theta q_{A,\theta}^2
}{
1-p_A(\beta)
}.
\end{align*}
Consequently, the difference in the posterior success probabilities has the following expression: 
\[
\begin{aligned}
p_A(\beta)-p_A(T_A(\beta))
& =
p_A(\beta)
-
\frac{
p_A(\beta)
-\sum_{\theta\in\Theta}\beta_\theta q_{A,\theta}^2
}{
1-p_A(\beta)
} \nonumber \\
& =
\frac{
\sum_{\theta\in\Theta}\beta_\theta q_{A,\theta}^2
-(p_A(\beta))^2
}{
1-p_A(\beta)
} =
\frac{
\sum_{\theta\in\Theta}
\beta_\theta(q_{A,\theta}-p_A(\beta))^2
}{
1-p_A(\beta)
}
\ge0,
\end{aligned}
\]
where the last equality uses
$\sum_{\theta\in\Theta}\beta_\theta=1$ and
$\sum_{\theta\in\Theta}\beta_\theta q_{A,\theta}=p_A(\beta)$,
and the inequality follows from $1-p_A(\beta)>0$.
Thus, $p_A(T_A(\beta))\le p_A(\beta)$. By repeatedly applying this argument, we conclude that  $p_A(T_A^n(\mu))$ is non-increasing in $n$.   
Note that 
 \begin{align*}
     p_A(T_A^n(\mu))
=\frac{
\sum_{\theta\in\{e,h\}}
\mu_\theta q_{A,\theta}(1-q_{A,\theta})^n
}{
\sum_{\theta\in\Theta}
\mu_\theta(1-q_{A,\theta})^n
} = 
\frac{
\sum_{\theta\in\{e,h\}}
\mu_\theta q_{A,\theta}(1-q_{A,\theta})^n
}{
\mu_o+
\sum_{\theta\in\{e,h\}}
\mu_\theta(1-q_{A,\theta})^n
}
\longrightarrow0.
 \end{align*}
Since $\mu_o > 0$ for Type~2 window, $n_0$ defined in part (ii) is finite. 
Then for every integer $n\ge0$, appending one $A$-call
to $(A^n;\mathrm{stop})$ changes its expected net
reward by
\[
\begin{aligned}
 V_\mu(A^{n+1};\mathrm{stop})
 -V_\mu(A^n;\mathrm{stop}) 
 =
\left[
\sum_{\theta\in\Theta}
\mu_\theta(1-q_{A,\theta})^n
\right]
\left[
Gp_A(T_A^n(\mu))-c_A
\right].
\end{aligned}
\] 
By the definition of $n_0$ and the monotonicity
established above, the second bracket in the RHS of the above equation  is non-negative
for $0\le n<n_0$ and non-positive for $n\ge n_0$.
Consequently, the sequence
$V_\mu(A^n;\mathrm{stop})$ is non-decreasing up to
$n=n_0$ and non-increasing thereafter.
Thus, $(A^{n_0};\mathrm{stop})$ is optimal among
all finite $A$-only plans.
Since an $A$-only plan is globally optimal in the
case under consideration, we have $V_\mu(A^{n_0};\mathrm{stop})=J_\infty(\mu).$

Finally, consider adding one call to $B$ if   $n_0$ calls to $A$ all fail.
The resulting change in expected net reward is
\begin{align*} V_\mu(A^{n_0}B;\mathrm{stop})
 -V_\mu(A^{n_0};\mathrm{stop}) =
\left[
\sum_{\theta\in\Theta}
\mu_\theta(1-q_{A,\theta})^{n_0}
\right]
\left[
Gp_B(T_A^{n_0}(\mu))-c_B
\right]
\ge0, 
\end{align*}
where the inequality follows from the condition
in part (ii).
Therefore, $(A^{n_0}B;\mathrm{stop})$ is also
optimal. Since $n_0\ge1$, both blocks are nonempty,
which proves part (ii). \hfill \qed

\section{Proofs of the Competitive Guarantee (Section \ref{sec:competitive})}\label{ec:proofs-competitive}

\subsection{Proof of Proposition~\ref{prop:prior-free-escalator}}\label{appendix-prop-structure}
The claim is immediate when $J\le1$. Otherwise, fix $1 \le   j<J$.
By the maximizing property of the selected configurations,
\begin{align*}
R_{\alpha_j}(B_j)
 \ge R_{\alpha_{j+1}}(B_j), \qquad 
R_{\alpha_{j+1}}(B_{j+1})
 \ge R_{\alpha_j}(B_{j+1}).
\end{align*}
Applying the definition of $R_a(B)$ and dividing the above inequalities by $B_j$ and $B_{j+1}$,
respectively, we obtain 
\begin{align} s_{\alpha_j}^*\left(\frac{1}{c_{\alpha_j}}- \frac{1}{B_j}\right) \ge s_{\alpha_{j+1}}^*\left(\frac{1}{c_{\alpha_{j+1}}}- \frac{1}{B_j}\right), \quad s_{\alpha_{j+1}}^*\left(\frac{1}{c_{\alpha_{j+1}}}- \frac{1}{B_{j+1}}\right) \ge s_{\alpha_{j}}^*\left(\frac{1}{c_{\alpha_{j}}}- \frac{1}{B_{j+1}}\right). \label{ineq-compare-escalator structure} 
\end{align}
Combining the above two inequalities, we have
\[
\left(\frac{1}{B_j}-\frac{1}{B_{j+1}}\right)
\left(s_{\alpha_{j+1}}^*-s_{\alpha_j}^*\right)\ge0.
\]
Since $B_{j+1}>B_j$, it follows that
$s_{\alpha_{j+1}}^*\ge s_{\alpha_j}^*$.
If $s_{\alpha_{j+1}}^*= s_{\alpha_j}^*$, since $\mathbf s^* >0$ by its definition,  the two inequalities in \eqref{ineq-compare-escalator structure} imply $c_{\alpha_j} \le c_{\alpha_{j+1}}$ and $c_{\alpha_{j+1}} \le c_{\alpha_{j}}$, respectively. Therefore, $c_{\alpha_j} = c_{\alpha_{j+1}}$ and  $\alpha_j=\alpha_{j+1}$, which does not affect the escalator structure. 

Now consider $s_{\alpha_{j+1}}^* > s_{\alpha_j}^*$.  Since $R_{\alpha_j}(B_j) \ge R_1(B_j) = 2^j -1 >0$, from the definition of $R_{\alpha_j}(B_j)$, we see that   $B_j>c_{\alpha_j}$.
Suppose $c_{\alpha_{j+1}}\le c_{\alpha_j}$. Then $B_j>c_{\alpha_{j+1}}$ and 
\begin{align*}
R_{\alpha_{j+1}}(B_j)
 =
s_{\alpha_{j+1}}^*
\left(\frac{B_j}{c_{\alpha_{j+1}}}-1\right)   > 
s_{\alpha_j}^*
\left(\frac{B_j}{c_{\alpha_j}}-1\right)
=
R_{\alpha_j}(B_j),
\end{align*}
leading to contradiction with the optimality of $\alpha_j$ in maximizing $R_a(B_j)$ over $a \in \mathcal{A}$.
Therefore, $c_{\alpha_{j+1}}>c_{\alpha_j}$, and the cost
ranking implies $\alpha_{j+1}>\alpha_j$. Since the analysis applies to any $j=1,2,\ldots, J-1$,  the returned
routing plan is an Escalator policy. \hfill \qed

\subsection{Proof of Theorem~\ref{thm:escalator-guarantee}}\label{appendix-thm-guarantee}
Fix $\theta\in\mathcal D$ and define
\[
\tau_\theta
:=
\min_{a\in\mathcal A}
\frac{-\log(1-q_{a,\theta})}{s_a^*}>0.
\]
Recall that $-\log(1-q_{a,\theta})$ is the additive
contribution of one $a$-call to the negative logarithm
of the all-failure probability.
Thus, $\tau_\theta$ is the smallest such contribution
per unit weight across configurations in state $\theta$.
It provides a common conservative rate for the auxiliary
model constructed below.   
By the definition of $\kappa_q(\mathbf s^*)$, we have
\begin{equation}
\tau_\theta s_a^*
\le
-\log(1-q_{a,\theta})
\le
\kappa_q(\mathbf s^*)\tau_\theta s_a^*,
\qquad a\in\mathcal A.
\label{eq:ep-log-comparison}
\end{equation}

We construct an auxiliary problem with success probabilities
$\widetilde q_{a,\theta}:=1-e^{-\tau_\theta s_a^*}$
for $a\in\mathcal A$ and $\theta \in  {\mathcal D}$, while still assuming conditional independence
across calls.
This replaces each configuration's logarithmic contribution
by $\tau_\theta$ times its weight, so the contribution
per unit weight is identical across configurations.
In particular, for any prescribed sequence
$a_1,\ldots,a_n$ with total weight
$u=\sum_{i=1}^n s_{a_i}^*$,
\[
\prod_{i=1}^n(1-\widetilde q_{a_i,\theta})
=
\exp\left(-\tau_\theta\sum_{i=1}^n s_{a_i}^*\right)
=
e^{-\tau_\theta u}.
\]
Thus, the auxiliary all-failure probability depends only
on the cumulative weight, rather than on the particular
configurations used. This exponential form allows us
to represent success through a single random weight
threshold later in the proof.  
Let $\widetilde V(\pi^{\rm EP}\mid\theta)$ denote the expected
net reward of the same Escalator policy in Algorithm~\ref{alg:det-escalator} when each success probability
$q_{a,\theta}$ is replaced by $\widetilde q_{a,\theta}$.
The reward $G$, the call costs, and the rule of stopping at the
first success remain the same.
We then make the following observation: 
\begin{equation}
G-V(\pi^{\rm EP}\mid\theta)
\le
G-\widetilde V(\pi^{\rm EP}\mid\theta), \qquad \forall \theta \in \mathcal{D}. 
\label{eq:ep-value-comparison}
\end{equation}
To see \eqref{eq:ep-value-comparison}, note that  the first inequality in \eqref{eq:ep-log-comparison} gives
 \begin{align}
     1-q_{a,\theta}
\le
e^{-\tau_\theta s_a^*}
=
1-\widetilde q_{a,\theta}.\label{ineq-comparison-fail probability}
 \end{align}
By applying the expression of $V(\cdot\mid\theta)$ in equation~\eqref{eq:statewise-value}, we obtain 
\begin{align*}
    G-V(\pi^{\rm EP}\mid\theta)
 =
G\prod_{j=1}^J
(1-q_{\alpha_j,\theta})^{n_j}
 +
\sum_{j=1}^J c_{\alpha_j}
\sum_{\ell=1}^{n_j}
\left[
\prod_{k=1}^{j-1}
(1-q_{\alpha_k,\theta})^{n_k}
\right]
(1-q_{\alpha_j,\theta})^{\ell-1},
\end{align*}
which increases in $1-q_{a,\theta}$. This, combined with~\eqref{ineq-comparison-fail probability}, implies~\eqref{eq:ep-value-comparison}. 

\vspace{1mm}
Before proceeding to bound $G-\widetilde V(\pi^{\text{EP}}|\theta)$, we introduce $\Phi(u)$ and $\Psi(u)$.
For every $u\ge0$, let
\[
\Phi(u)
:=
\min_{a\in\mathcal A}
c_a\left(1+\frac{u}{s_a^*}\right).
\]
For a fixed configuration $a$, the expression
$c_a(1+u/s_a^*)$ is an upper bound on the total cost of the
$\lceil u/s_a^*\rceil$ calls needed to attain a total weight
of at least $u$. 
When $J=0$, define $\Psi(u):=G$ for all $u\ge0$, and when $J\ge1$, define
\[
\begin{aligned}
\Psi(u)
:={}&
\sum_{j=1}^J c_{\alpha_j}
\sum_{\ell=1}^{n_j}
\mathbf 1\left\{
\sum_{k=1}^{j-1}n_k s_{\alpha_k}^*
+
(\ell-1)s_{\alpha_j}^*
\le u
\right\} +
G\mathbf 1\{u\ge R(B_J)\},
\qquad u\ge0.
\end{aligned}
\]
The double sum includes the cost of each scheduled call whose
starting cumulative weight is at most $u$.
The last term assigns a loss of $G$ once $u$ reaches the
final target $R(B_J)$; it is used to bound the reward lost
when the plan fails.

\vspace{1mm}
We next relate $\Psi$ to the auxiliary expected net reward. Let  $U$ be an exponential random variable with mean $1/\tau_{\theta}$ and we  establish the following inequality:  
\begin{align}
G-\widetilde V(\pi^{\rm EP}\mid\theta) \le  \mathbb{E}[\Psi(U)]. \label{eq:ep-aux-shortfall}
\end{align} 
To understand the above inequality intuitively, since $\Pr(U>u)=e^{-\tau_\theta u}$ equals the auxiliary probability
that calls with total weight $u$ all fail, $U$ can be interpreted as
a random weight threshold for success. A scheduled call is reached
when its starting cumulative weight is below $U$, so the double sum
in $\Psi(U)$ records the corresponding calling cost. The term
$G\mathbf 1\{U\ge R(B_J)\}$ upper-bounds the reward lost when the
plan terminates without success. Hence, $\Psi(U)$ upper-bounds the
realized loss, consisting of calling cost plus forgone reward.
Taking expectations gives~\eqref{eq:ep-aux-shortfall}. 
To formally prove  \eqref{eq:ep-aux-shortfall},  when $J=0$, both sides equal $G$, so
\eqref{eq:ep-aux-shortfall} \textcolor{black}{naturally} holds. 
When $J\ge1$, under the auxiliary success probabilities, the probability
of reaching the $\ell$th call in stage $j$ is $\exp(
-\tau_\theta
[
\sum_{k=1}^{j-1}n_k s_{\alpha_k}^*
+
(\ell-1)s_{\alpha_j}^*
]
)$. Therefore, we have
\begin{align}
G-\widetilde V(\pi^{\rm EP}\mid\theta)
&= 
\sum_{j=1}^J c_{\alpha_j}
\sum_{\ell=1}^{n_j}
\exp\left(
-\tau_\theta
\left[
\sum_{k=1}^{j-1}n_k s_{\alpha_k}^*
+
(\ell-1)s_{\alpha_j}^*
\right]
\right) +
G\exp\left(
-\tau_\theta\sum_{j=1}^J n_j s_{\alpha_j}^*
\right) \nonumber \\
&\le \sum_{j=1}^J c_{\alpha_j}
\sum_{\ell=1}^{n_j} \mathbb{P}\left( 
\sum_{k=1}^{j-1}n_k s_{\alpha_k}^*
+
(\ell-1)s_{\alpha_j}^*\le U\right) + G\exp\left(
-\tau_\theta\sum_{j=1}^J n_j s_{\alpha_j}^*
\right) \nonumber \\
&\le \sum_{j=1}^J c_{\alpha_j}
\sum_{\ell=1}^{n_j} \mathbb{P}\left( 
\sum_{k=1}^{j-1}n_k s_{\alpha_k}^*
+
(\ell-1)s_{\alpha_j}^*\le U\right) +\mathbb E[G\mathbf 1\{U\ge R(B_J)\}]\nonumber\\
& = \mathbb{E}[\Psi(U)], \nonumber  
\end{align} 
where   the first inequality follows by applying expression of  the cumulative distribution function of $U$, and 
the second inequality follows from $\sum_{j=1}^J n_js_{\alpha_j}^*\ge R(B_J)$, as guaranteed by the updating rule. 
 This completes the proof of inequality~\eqref{eq:ep-aux-shortfall}.

 \vspace{1mm}
We now prove that
\begin{align}
\Psi(u)\le 4\min\{\Phi(u),G\},
\qquad u\ge0.
\label{eq:ep-pointwise-coverage}
\end{align}
Since $\Phi(u)\ge c_1$, when   $J=0$, we have  
$G<2c_1$ and $\Psi(u)=G\le 2\min\{\Phi(u),G\}$.  
Now suppose $J\ge1$, so that $B_J\le G<2B_J$  from the definition of $J$ and $B_J$.  
Note that when $n_j>0$,
the ceiling rule and the nonnegativity of the weight accumulated
before stage $j$ give
\[
n_jc_{\alpha_j}
\le c_{\alpha_j}\left(1+\frac{R(B_j)}{s_{\alpha_j}^*}\right) = c_{\alpha_j}\left(1+\frac{R_{\alpha_j}(B_j)}{s_{\alpha_j}^*}\right) = c_{\alpha_j}\left(1+\frac{s_{\alpha_j}^*(B_j/c_{\alpha_j}-1)}{s_{\alpha_j}^*}\right)
=B_j. 
\]
The same bound is immediate when $n_j=0$. Consequently,
\begin{align}
\sum_{k=1}^{j}n_kc_{\alpha_k}
\le \sum_{k=1}^{j}B_k
=2B_j-2c_1<2B_j,
\qquad j=1,\ldots,J.
\label{eq:ec14}
\end{align}
Also, the definitions of $R(\cdot)$ and $\Phi(\cdot)$ imply the following  relationship: 
\begin{align}
u\ge R(B) \iff
u\ge \max_{a\in \mathcal{A}}s_a^*\left(\frac{B}{c_a}-1\right) 
 \iff
c_a\left(1+\frac{u}{s_a^*}\right)\ge B, \, \forall a\in\mathcal A \iff
\Phi(u)\ge B.\label{eq:ep-reach-inverse}
\end{align} 
We distinguish two cases according to the value of $\Phi(u)$.
\begin{itemize}
    \item If $\Phi(u)<B_J$, then $\Phi(u)\ge c_1$ and the doubling
of the budgets ensure that there exists $j\in\{1,\ldots,J\}$
such that $B_j/2=B_{j-1}\le\Phi(u)<B_j.$
By \eqref{eq:ep-reach-inverse}, $u<R(B_j)\le R(B_J)$, so the
$G$ term in $\Psi(u)$ vanishes. The cumulative
weight at the end of stage $j$ is at least $R(B_j)>u$,
so no call in a later stage is included in $\Psi(u)$.
Using \eqref{eq:ec14} and $\Phi(u)<B_J\le G$, we obtain
\begin{align*}
\Psi(u) \le \sum_{k=1}^{j}n_kc_{\alpha_k}<2B_j \le 4\Phi(u)=4\min\{\Phi(u),G\}.
\end{align*}
\item If $\Phi(u)\ge B_J$, then both $\Phi(u)$ and $G$ are
at least $B_J$. Bounding $\Psi(u)$ by the total scheduled
cost plus $G$ gives
\begin{align*}
\Psi(u)
 \le \sum_{j=1}^{J}n_jc_{\alpha_j}+G
<2B_J+G <4B_J\le4\min\{\Phi(u),G\}.
\end{align*}
This completes the proof of \eqref{eq:ep-pointwise-coverage}. 
\end{itemize}

\vspace{1mm}
We now establish the following upper bound on $\mathbb E[\Phi(U)]$:
\begin{equation}
\mathbb E[\Phi(U)]
\le
h\bigl(\kappa_q(\mathbf s^*)\bigr)
\min_{a\in\mathcal A}\gamma_{a,\theta}.
\label{eq:ep-comparison-expectation}
\end{equation}
For every $a\in\mathcal A$, the definition of $\Phi$ gives $\mathbb E[\Phi(U)]
\le
c_a (
1+{\mathbb E[U]}/{s_a^*}
 )
=
c_a (
1+ {1}/{(\tau_\theta s_a^*)}
).$
The second inequality in \eqref{eq:ep-log-comparison}
implies $q_{a,\theta}
\le
1-e^{-\kappa_q(\mathbf s^*)\tau_\theta s_a^*}.$
Recalling $\gamma_{a,\theta}=c_a/q_{a,\theta}$, we obtain
\[
\begin{aligned}
\mathbb E[\Phi(U)]
&\le
\gamma_{a,\theta}q_{a,\theta}
\left(
1+\frac{1}{\tau_\theta s_a^*}
\right)
 \le
\gamma_{a,\theta}
\left(
1-e^{-\kappa_q(\mathbf s^*)\tau_\theta s_a^*}
\right)
\left(
1+\frac{1}{\tau_\theta s_a^*}
\right) 
 \le
\gamma_{a,\theta}
h\bigl(\kappa_q(\mathbf s^*)\bigr),
\end{aligned}
\]
where the last inequality follows 
by applying the definition of $h(\cdot)$ and using the fact that $\kappa_q(\mathbf s^*) \ge 1$. 
Since this bound holds for every configuration, inequality~\eqref{eq:ep-comparison-expectation} holds.

\vspace{1mm}
We can now combine these inequalities to bound $G-V(\pi^{\text{EP}}|\theta)$ for any $\theta\in \mathcal{D}$.
Using \eqref{eq:ep-value-comparison},
\eqref{eq:ep-aux-shortfall},
\eqref{eq:ep-pointwise-coverage}, and
\eqref{eq:ep-comparison-expectation}, we obtain
\[
\begin{aligned}
G-V(\pi^{\rm EP}\mid\theta)
&\le
G-\widetilde V(\pi^{\rm EP}\mid\theta)
 \le
\mathbb E[\Psi(U)]
 \le
4\mathbb E[\min\{\Phi(U),G\}]
 \le
4\min\{\mathbb E[\Phi(U)],G\}
\\
&\le
4\min\left\{
h\bigl(\kappa_q(\mathbf s^*)\bigr)
\min_{a\in\mathcal A}\gamma_{a,\theta},
G
\right\}
 \le
4h\bigl(\kappa_q(\mathbf s^*)\bigr)
\min\left\{
\min_{a\in\mathcal A}\gamma_{a,\theta},
G
\right\}
 =
4h\bigl(\kappa_q(\mathbf s^*)\bigr)
\bigl(G-V^{\rm CL}(\theta)\bigr)\textcolor{black}{,}
\end{aligned}
\]
where the fourth inequality holds because the random minimum is bounded above by both
$\Phi(U)$ and $G$, the last inequality holds because  
$h(\kappa_q(\mathbf s^*))\ge1$, and the final equality
follows from equation~\eqref{eq-CL-reward}.

It remains to consider the non-improvable state $o$.
All scheduled calls fail in this state, so $V(\pi^{\rm EP}\mid o)
=
-\sum_{j=1}^J n_jc_{\alpha_j}.$
When $J\ge1$, \eqref{eq:ec14} and $B_J\le G$  imply  $\sum_{j=1}^J n_jc_{\alpha_j} \le 2B_J<2G.$
The same inequality holds when $J=0$, since the sum is zero.
As $V^{\rm CL}(o)=0$, we conclude that
\[
\begin{aligned}
G-V(\pi^{\rm EP}\mid o)
 =
G+\sum_{j=1}^J n_jc_{\alpha_j}
 <
3G
 \le
4h\bigl(\kappa_q(\mathbf s^*)\bigr)
\bigl(G-V^{\rm CL}(o)\bigr).
\end{aligned}
\]

The desired comparison thus holds for every
$\theta\in\Theta$.
Dividing by the positive quantity
$G-V^{\rm CL}(\theta)$ and taking the supremum over
$\theta$ yields $\operatorname{CR}(\pi^{\rm EP})
\le
4h\bigl(\kappa_q(\mathbf s^*)\bigr),$
which proves the theorem. \hfill \qed

\section{Details of Numerical Study (Section \ref{sec:numerics})}\label{ec:numerics}

\subsection{Models, Menu and Settings}\label{ec:num-models}

\textcolor{black}{We access the Qwen \textcolor{black}{LLMs} described in Section~\ref{sec:numerics} through Alibaba Cloud Model Studio as \texttt{qwen3.5-flash} and \texttt{qwen3.6-flash} \citep{qwen2026qwen35, qwen2026qwen36}. Both run with thinking enabled. Thinking tokens account for about four fifths of output tokens and are billed as output in all reported costs.}

\textcolor{black}{\textcolor{black}{The configuration menu and attempt counts follow Section~\ref{sec:num-main}, with restart interval $r=3$ in Type~1 windows.} Qwen3.5-Flash constructs the parameter schema and handles the first two syntax-repair rounds. Qwen3.6-Flash writes the initial formulation and handles subsequent repair rounds. An attempt is abandoned if the same error recurs three times. COPT runs on one thread with a memory cap per solve.}

\subsection{Baselines}\label{ec:num-baselines}

\textcolor{black}{Each one-shot repetition uses OSCAR's initial Coder prompts, followed by syntax repair, and returns the first runnable program's output. It has no Simulator, Reviewer or second formulation.} \textcolor{black}{Unlike OSCAR, both one-shot models use Qwen3.5-Flash for at most ten repair rounds, so only the initial Coder model varies.} \textcolor{black}{A program that still fails counts as a failed repetition. The structure constructor is called and charged for every repetition. We run 100 repetitions per model and problem.}

\textcolor{black}{For each problem, we run five independent agents with Claude Code using Fable~5.1 and five with Codex using GPT-6 Astra. Each receives only the description and instance and must write and run a COPT program that returns a solution in the required format. The agents cannot access the trusted checker, reference solution, Simulator or feedback. We grade their final solutions with the trusted checker and report the outcomes in Table~\ref{tab:num-main}. As discussed in Section~\ref{sec:num-main}, these runs primarily compare costs. In earlier testing, Fable~5.0 failed on one problem because of a syntax error, which Fable~5.1 did not repeat.}

\subsection{Cost Accounting}\label{ec:num-cost}

\textcolor{black}{We use each model and platform's list rates. At Alibaba Cloud's Beijing endpoint, prices per million input and output tokens are 0.2 and 2 RMB for Qwen3.5-Flash, and 1.2 and 7.2 RMB for Qwen3.6-Flash.} \textcolor{black}{We convert at 6.91 RMB per dollar, the rounded average of daily Federal Reserve H.10 rates from September 12, 2025 through September 11, 2026.} \textcolor{black}{Fable~5.1 and GPT-6 Astra use their providers' dollar rates. Cached input is charged at each provider's cached-input rate.} \textcolor{black}{GPT-6 Astra's rates per million tokens are \$10 for uncached input, \$1 for cached input, \$12.50 for cache writes and \$50 for output. Agent costs use token counts reported in their transcripts.}

\textcolor{black}{Each Simulator is built once per problem, but every run is charged its full build cost to represent a run from scratch. The repeated runs therefore evaluate OSCAR conditional on the constructed Simulator. Table~\ref{tab:num-cost} gives the cost components. The stronger Coder is the largest recurring expense, consistent with Section~\ref{sec:observations}. Although a weaker-model Reviewer call costs more than a Coder call, upgrading the Coder costs more than upgrading the Reviewer, preserving the configuration cost order.}

\begin{table}[htbp]
\centering
\small
\setlength{\tabcolsep}{3pt}
\caption{\textcolor{black}{Mean cost in dollars (model calls) per OSCAR run \textcolor{black}{at $n=1$ unless indicated otherwise}. Initial formulation includes its syntax repair. Loop costs include the Type~1 extension. Totals may differ from component sums because of rounding.}}\label{tab:num-cost}
\begin{tabular}{llcccccc}
\toprule
Component & Model & \textcolor{black}{\shortstack{Routing\\$n=1$}} & \textcolor{black}{\shortstack{Routing\\$n=(1,1,3,3)$}} & Graph & Packing & Scheduling & Quadratic \\
\midrule
Simulator build     & both    & 0.025 (9.0)  & \textcolor{black}{0.025 (9.0)} & 0.007 (3.0)  & \textcolor{black}{0.084} (\textcolor{black}{16.0}) & 0.075 (11.0) & 0.062 (14.0) \\
Structure           & Qwen3.5 & 0.002 (1.0)  & \textcolor{black}{0.002 (1.0)} & 0.001 (1.0)  & 0.002 (1.0)  & 0.002 (1.0)  & 0.001 (1.0) \\
Initial formulation & both    & 0.052 (4.4)  & \textcolor{black}{0.043 (4.0)} & 0.007 (1.9)  & 0.033 (4.2)  & 0.051 (3.8)  & 0.020 (3.6) \\
Reviewer            & Qwen3.5 & 0.010 (3.5)  & \textcolor{black}{0.021 (7.1)} & 0.005 (2.1)  & 0.011 (3.8)  & 0.009 (3.4)  & 0.007 (2.2) \\
Reviewer            & Qwen3.6 & 0.023 (2.7)  & \textcolor{black}{0.046 (5.1)} & 0.008 (2.0)  & 0.048 (6.0)  & 0.025 (3.4)  & 0.014 (2.1) \\
Coder               & Qwen3.5 & 0.007 (3.8)  & \textcolor{black}{0.008 (4.7)} & 0.002 (2.1)  & 0.006 (3.9)  & 0.005 (3.5)  & 0.003 (2.2) \\
Coder               & Qwen3.6 & 0.034 (2.3)  & \textcolor{black}{0.108 (7.4)} & 0.014 (2.0)  & 0.065 (5.8)  & 0.044 (3.2)  & 0.018 (2.0) \\
Syntax repair       & both    & 0.016 (4.1)  & \textcolor{black}{0.026 (6.2)} & 0.003 (1.1)  & 0.018 (5.1)  & 0.028 (6.6)  & 0.009 (2.6) \\
\midrule
Total               &         & 0.168 (30.9) & \textcolor{black}{0.279 (44.4)} & 0.048 (15.3) & \textcolor{black}{0.267} (\textcolor{black}{45.8}) & 0.240 (35.9) & 0.134 (29.7) \\
\bottomrule
\end{tabular}
\end{table}

\end{document}